\theoremstyle{plain}
\newtheorem{thm}{Theorem}[section]
\newtheorem{lem}[thm]{Lemma}
\theoremstyle{definition}
\newtheorem{defn}{Definition}[section]
\theoremstyle{remark}
\title{Least Absolute Gradient Selector: variable selection via Pseudo-Hard Thresholding}
\author{Kun Yang\footnote{Kun Yang (Email: \texttt{kunyang@stanford.edu}) is a PhD student at Institute for Computational and Mathematical Engineering, Stanford University. Kun Yang is partially supported by General Wang Yaowu Fellowship.}}
\begin{document}
\maketitle
\begin{abstract}
In this paper, we propose a new approach, called the \textbf{LAGS}, short for ``least absulute gradient selector'', to this challenging yet interesting problem by mimicking the discrete selection process of $l_0$ regularization in linear regression. To estimate $\beta$ under the influence of noise, we consider, nevertheless, the following convex program
\begin{equation*}
\hat{\beta} = \textrm{arg min}\frac{1}{n}\|X^{T}(y - X\beta)\|_1 + \lambda_n\sum_{i = 1}^pw_i(y;X;n)|\beta_i|
\end{equation*}
$\lambda_n > 0$ controls the sparsity and $w_i > 0$ dependent on $y, X$ and $n$ is the weights on different $\beta_i$; $n$ is the sample size. Surprisingly, we shall show in the paper, both geometrically and analytically,  that LAGS enjoys two attractive properties: (1) LAGS demonstrates discrete selection behavior and hard thresholding property as $l_0$ regularization by strategically chosen $w_i$, we call this property \emph{``pseudo-hard thresholding''}; (2) Asymptotically, LAGS is consistent and capable of discovering the true model; nonasymptotically, LAGS is capable of identifying the sparsity in the model and the prediction error of the coefficients is bounded at the noise level up to a logarithmic factor---$\log p$, where $p$ is the number of predictors.
\par Computationally, LAGS can be solved efficiently by convex program routines for its convexity or by simplex algorithm after recasting it into a linear program. The numeric simulation shows that LAGS is superior compared to soft-thresholding methods in terms of mean squared error and parsimony of the model.
\end{abstract}
\section{Introduction}\label{intro}
One of the most widely used model in statistics is the linear regression. In many applications, scientists are interested in estimating a mean response $X\beta$ from the data $y = (y_1, y_2, ..., y_n)$. The $p-$dimensional parameter of interest $\beta$ are estimated from the linear model
\begin{equation}
y = \alpha + X\beta + \epsilon
\label{linsetup}
\end{equation}
where $\alpha$ is the intercept, $\epsilon$ the noise. A common assumption is that $\epsilon$ is Gaussian with $\epsilon_i \sim \mathcal{N}(0, \sigma^2)$, but this is not an essential requirement as our methods are applicable to other types of noise which have heavier tails.
\par Scientists usually have no information of the underlying models, a large number of predictors are chosen in initial stage to attenuate the possible bias and variance as well as to enhance predictability. Regression procedures capable of identifying the explanatory variables (others are set to be or shrunken to $0$) and obtain good estimates of the response play a pivotal role. Ideally, Best Subset regression such AIC \cite{Akaike1974}, $C_p$ \cite{Mallows2000}, BIC \cite{Schwarz1978} and RIC \cite{Foster1994} achieve the trade-off between model complexity and goodness of fit. These estimators are essentially least square penalized by $l_0$ norm of $\beta$ with different control coefficients. A standard formulation of $l_0$ penalized least square is
\begin{equation}\label{hardth}
(\hat{\alpha}, \hat{\beta}) = \textrm{arg min}\frac{1}{2}\|y-\alpha\mathbf{1}-X\beta\|_2^2 + \lambda\|\beta\|_0
\end{equation}
$\lambda$ is the control coefficient. For the rest of the paper, we assume that columns of $X$ are standardized and $y$ is centered. Under this setting $\alpha = 0$, hence we omit it.
\par  $l_0$ penalized least square has the hard thresholding property that keeps the large coefficient intact while sets small ones to be zero. However, unfortunately, solving (\ref{hardth}) is a discrete process which needs to enumerate all the possible subset of $\beta$; the combinatorial nature of $l_0$ norm limits the application of (\ref{hardth}) when the number of predictors is large. As a compromise, convex relaxation to $l_1$ norm such as the Lasso \cite{Tibshirani1996} is a widely used technique for simultaneously estimation and variable selection. The Lasso is
\begin{equation}
\label{lasso}
\hat{\beta} = \textrm{arg min}\frac{1}{2}\|y - X\beta\|_2^2 + \lambda\|\beta\|_1
\end{equation}
The $l_1$ penalty shrinks $\beta$ towards zero, and also sets many coefficients to be exactly zero. Thus, the Lasso often regards as the substitute of (\ref{hardth}).
\par The Lasso and the subsequently appeared methods, such as the LARS \cite{Efron2004}, elastic-net \cite{Zou2005}, adaptive Lasso \cite{Zou2006}, Dantzig Selector \cite{Candes2007} to name a few, closely relate to the soft thresholding in signal processing \cite{Donoho1995} in contrast to hard thresholding as in  (\ref{hardth}). Since the soft thresholding both shrinks and selects, it often results in a model more complicated than the true model in its effort to spread the penalty among the predictors. As a greedier attempt, SCAD \cite{Fan2001,Fan2010,Fan2008} and SparseNet \cite{Mazumder2011}penalize the loss function by non-convex penalties. Similar to (\ref{lasso}), $\beta$ is estimated by
\begin{equation}\label{SCAD}
\hat{\beta} = \textrm{arg min}\frac{1}{2}\|y-X\beta\|_2^2 + \lambda\sum_{i = 1}^pP(|\beta_i|;\lambda;\gamma)
\end{equation}
where $P(|\beta_i|;\lambda;\gamma)$ defines a family of penalty functions concave in $\beta$, and $\lambda$ and $\gamma$ controls the sparsity and concavity. It has been shown that (\ref{SCAD}) enjoys better variable selection properties compared to $l_1$ relaxation; whereas, both algorithms cannot assure to find the global optimal.
\par In this paper, we propose a new approach, called the \textbf{LAGS}, short for ``least absolute gradient selector'', to this challenging yet interesting problem by mimicking the discrete selection process of $l_0$ regularization. To estimate $\beta$ under the influence of noise, we consider, nevertheless, the following convex program
\begin{equation}\label{lags}
\hat{\beta} = \textrm{arg min}\frac{1}{n}\|X^{T}(y - X\beta)\|_1 + \lambda_n\sum_{i = 1}^pw_i(y;X;n)|\beta_i|
\end{equation}
$\lambda_n > 0$ controls the sparsity and $w_i > 0$ dependent on $y, X$ and $n$ is the weights on different $\beta_i$; $n$ is the sample size. Surprisingly, we shall show in the following sections, both geometrically and analytically,  that (\ref{lags}) demonstrates discrete selection behavior as $l_0$ penalty and hard thresholding property by strategically chosen $w_i$, we call this property \emph{``pseudo-hard thresholding''}. The graphical comparison of hard thresholding and pseudo-hard thresholding is given in prostate cancer example in section 7.
\par The rest of the paper is organized as follows: section 2 presents the motivation of LAGS and connects the ideas with previous work; section 3 establishes the theorem regarding the properties of LAGS and highlights the ``pseudo-hard thresholding''; section 4 shows the potential problems associated with the Dantzig Selector and provides a neat way to choose $w_i$; section 5 are the proofs of the theorems; section 6 discusses the computational issue of how to solve LAGS; section 7 demonstrates it by numeric examples; discussion and future work are in section 8.
\section{The LAGS}
\subsection{Insight from orthonormal case}
The properties of hard thresholding can be better understood when the design $X$ is orthonormal, i.e., $X^TX = I$. The solution for (\ref{hardth}) is
\begin{equation}\label{hardsol}
\hat{\beta_j}^{l_0} = \beta_j^{o}\mathbf{I}(|\beta_j^o| \geq \lambda)
\end{equation}
as a reference, the Lasso solution is
\begin{equation}\label{lassosol}
\hat{\beta_j}^{lasso} = \textrm{sign}(\beta_j^{o})(|\beta_j^o| - \lambda)_+
\end{equation}
where $\beta^o = X^Ty$ is the ordinary least square (OLS) estimate. Notice that the hard thresholding operator (\ref{hardsol}) is discontinuous with the jump at $|\beta_j^o| = \lambda$, while the soft thresholding operator (\ref{lassosol}) is continuous. If we assume $|\beta_1^o| \geq ...|\beta_{p_0}^o| > |\beta_{p_0 + 1}^o| ...\geq |\beta_{p}^o|$, another property of (\ref{hardsol}) is that any $\lambda\in (|\beta_{p_0}^o|, |\beta_{p_0 + 1}^o|)$ will keep the first $p_0$ coefficients. Based on this property, we define ``pseudo-hard thresholding'' as
\begin{defn}\label{pseudo}
  A penalized regression has ``pseudo-hard thresholding'' property if there exist some intervals of $R^+$, such that changing sparsity parameter $\lambda$ in these intervals will keep the coefficients $\beta$ unaltered.
\end{defn}
In order to achieve pseudo-hard thresholding in convex world, consider function of $\theta$
\begin{equation}\label{lagseq}
f(\theta;z) = |z - \theta| + \frac{\gamma}{\lambda}|\theta|
\end{equation}
Minimizing $f(\theta;z)$, the solution is
\begin{equation}
\label{lagsol}
\hat{\theta} = \left\{
\begin{array}{ll}
z & \gamma \leq \lambda\\
0 & \gamma > \lambda
\end{array}
\right.
\end{equation}
\begin{figure}
  \center
  \caption{Plot of $f(\theta;z)$. The left panel has $\gamma / \lambda < 1$; the right panel $\gamma / \lambda >1$. It is obvious that the minimum is obtained at $x = 1$ on the left, while $x = 0$ on the right}
  \includegraphics[width = 4in, height = 2in]{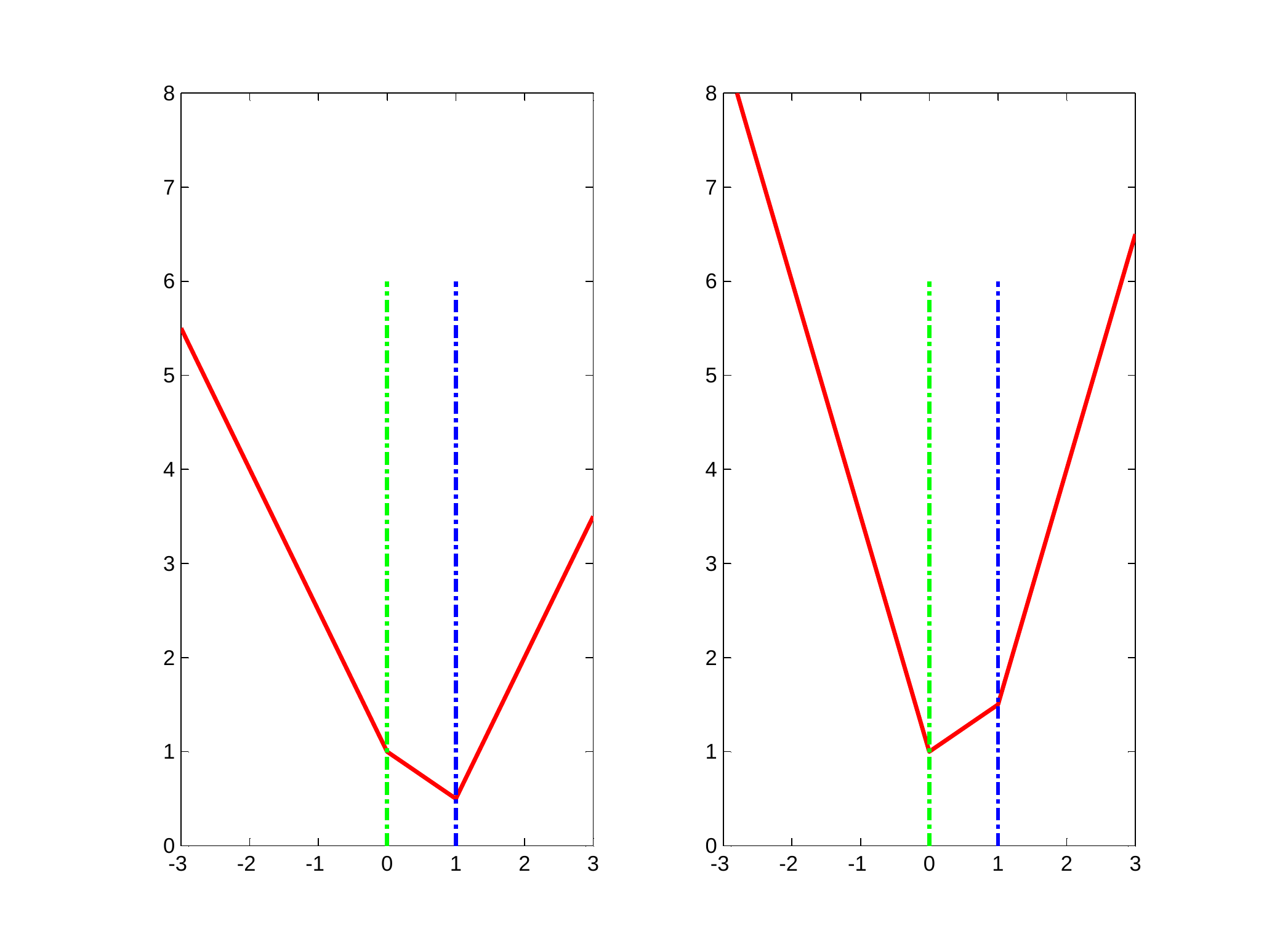}
  \label{lags_func}
\end{figure}
$\hat{\theta}$ is a discontinuous function of $\gamma$, with the breaking point at $\gamma = \lambda$. Motivated by this special property of (\ref{lagseq}), we formulate LAGS as in (\ref{lags}). As a matter of fact, the idea to minimize $\|X^T(y - X\beta)\|$ is pioneered by \cite{Candes2007} in Dantzig Selector
\begin{equation}
  \label{Dantzig1}
  \hat{\beta} = \textrm{arg }\min\|\beta\|_1 \quad\quad\textrm{subject to}\quad \|X^T(y - X\beta)\|_{\infty} \leq t
\end{equation}
which can be written equivalently as
\begin{equation}
  \label{Dantzig2}
  \hat{\beta} = \textrm{arg }\min\|X^T(y - X\beta)\|_{\infty} \quad\quad\textrm{subject to}\quad \|\beta\|_1 \leq t
\end{equation}
One reason why $\|X^T(y - X\beta)\|$ should be small given in \cite{Candes2007} is that a good estimate of $\beta$ should be independent of orthogonal transformations. Another more compelling yet insightful argument is that in OLS one needs to minimize $f_{OLS} = \frac{1}{2}\|y - X\beta\|_2^2$, its gradient is $\nabla f_{OLS} = -X^T(y - X\beta)$; solving $\nabla f_{OLS} = 0$ results in OLS estimates, hence, one can expect that a good $\hat{\beta}$ should shrink $\nabla f_{OLS}$ towards $\mathbf{0}$. In order to incorporate (\ref{lagseq}), $l_1$ is chosen here---from where the name ``LAGS'' comes; we will show in the following sections that this choice of norm can set some elements of the gradient to zero, which means unrestricted coefficients for them, like in $l_0$ penalty. Moreover, if we instead consider the absolute deviance by substituting $\|X^T(y - X\beta)\|_1$ with $\|y - X\beta\|_1$, it is the LAD-Lasso \cite{Wang2007}; but LAD-Lasso has no hard thresholding property even in orthonormal design case.
\subsection{The weight $w_i$ matters}
In the orthonormal case, (\ref{lags}) becomes
\begin{equation}
  \label{ortholags}
  \hat{\beta_i} = \textrm{arg min}|\beta_i^o - \hat{\beta}_i| + \lambda w_i|\hat{\beta}_i|, i = 1, ..., p
\end{equation}
One heuristic to choose $w_i$ is to consider the correlation $c_i$ between $y$ and $x_i$---the $i$th column of $X$: if $|c_i|$ is large, which means $i$th predictor may be a good explanatory variable, hence $\beta_i$ should be penalized less; otherwise, it should be penalized more. Thus, we set
\begin{equation}
  \label{W1CH}
  w_i = \frac{1}{|c_i|}
\end{equation}
with a little abuse of notation, $w_i = \infty$ when $c_i = 0$. Without loss of generality, we assume that $|c_1| \geq ...|c_{p_0}| > ...\geq|c_p|$, which implies $w_1 \leq ...  w_{p_0} < ...\leq w_p$. By choosing $\lambda$, s.t. $w_{p_0 + 1}^{-1} < \lambda < w_{p_0}^{-1}$, we have $\hat{\beta}_i = \beta_i^o, i = 1, ..., p_0$; $\hat{\beta}_i = 0, i = p_0 + 1, ..., p$, which is pseudo-hard thresholding and $\hat{\beta}$ is identical with (\ref{hardsol}).
\par To push this heuristic further, we notice that $c_i$ is the coefficient of OLS in the orthonormal design case. This suggests that we can choose $w_i^{-1}$ as absolute value of OLS coefficients $\beta_i^{o}$, see section 4 for detail.
\section{Properties of LAGS}
In section 2, some properties of LAGS are demonstrated in the simplest case. These properties are not incidental. To formally state our results, we decompose the regression coefficient as $\beta = (\beta^{(1)}, \beta^{(2)})$, where $\beta^{(1)} = (\beta_1, ..., \beta_{p_0})$ corresponds to the true parameters and $\beta^{(2)} = (\beta_{p_0 + 1}, ..., \beta{p})$ are redundant; the columns of $X$ are decomposed alike. Furthermore, define
\begin{equation}
  \label{an}
  a_n = \max_{1\leq j\leq p_0} \{w_i^n\}
\end{equation}
and
\begin{equation}
  \label{bn}
  b_n = \min_{p_0 + 1\leq j\leq p}\{w_i^n\}
\end{equation}
We assume three conditions:
\begin{itemize}
  \item[(a)] $y = x^{(1)}\beta^{(1)} + \epsilon$, where $\epsilon$ is noise with mean 0 and variance $\sigma^2$. \\
  \item[(b)] $C_n = \frac{1}{n}X^TX\rightarrow C$ and $C_n = \left[\begin{array}{cc}C_{11}^n & C_{12}^n\\C_{21}^n & C_{22}^n\end{array}\right]$, $C = \left[\begin{array}{cc}C_{11} & C_{12}\\C_{21} & C_{22}\end{array}\right]$, where $C_n$ and $C$ are positive definite matrices.\\
  \item[(c)] $\|C_{11}^{-1}C_{12}\|_{\infty} \leq 1-\eta$ and $\|(C_{11}^n)^{-1}C_{12}^n\|_{\infty} \leq 1-\eta_n$, where $\eta$ and $\eta_n$ are positive constants. This condition is established in \cite{Zou2006} and also called Irrespresentable Condition in \cite{Zhao2007}.
\end{itemize}
$C$ is actually the covariance matrix of predictors. Consider set $\Omega = \{s\in\mathbb{R}^p: \|s\|_{\infty} = 1, \max\{|s_1|, ..., |s_{p_0}|\} = 1\}$, which is closed and contains in the unit sphere under $l_{\infty}$ norm, hence $\Omega$ is a compact set. Let us define
\begin{equation}
\label{defgamma}
\gamma = \min_{s\in\Omega}\{\|[C_{11}, C_{12}](s^{(1)}, s^{(2)})^T\|_{\infty}\}
\end{equation}
here we partition $s$ as above. It is obvious that $\|s^{(1)}\|_{\infty} = 1$ and $\|s^{(2)}\|_{\infty} \leq 1$, thus $\|[I, C_{11}^{-1}C_{12}]s\|_{\infty}\geq\|s^{(1)}\|_{\infty} - \|C_{11}^{-1}C_{12}s^{(2)}\|_{\infty} \geq 1 - (1 - \eta) = \eta$. $\gamma > 0$ is trivial by noting that $\|C_{11}v\|_{\infty} = 0 \Leftrightarrow v = 0$.
\begin{thm}
  \label{Consistency}
  We can find $\tilde{a}, \tilde{b}$ dependent on $C$, typically, $\tilde{a} = \gamma, \tilde{b} = \|C\|_{\infty}$, such that if $\lim \lambda_n a_n < \tilde{a}$ and $\lim \lambda_n b_n > \tilde{b}$, then LAGS is consistent and has pseudo-hard thresholding property.
\end{thm}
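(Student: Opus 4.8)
The plan is to run the subgradient optimality conditions of the convex program (\ref{lags}) and to exhibit one explicit minimizer whose support is exactly the true model and which does not move with $\lambda_n$ over an interval. Write $g(\beta) = X^T(y - X\beta)$ and let $W = \mathrm{diag}(w_1^n, \ldots, w_p^n)$ with diagonal blocks $W_1, W_2$. Since $L(\beta) = \frac1n\|g(\beta)\|_1 + \lambda_n\sum_i w_i^n|\beta_i|$ is a sum of $\ell_1$-norms of affine maps of $\beta$, it is convex, and $\hat\beta$ is a global minimizer iff $0 \in \partial L(\hat\beta)$. Using $\partial_\beta\|g(\beta)\|_1 = -X^TX v$ with $v$ a subgradient of $\|\cdot\|_1$ at $g(\hat\beta)$, the condition becomes
\begin{equation*}
C_n v = \lambda_n W z,
\end{equation*}
where $v_k = \mathrm{sign}(g_k(\hat\beta))$ when $g_k(\hat\beta) \ne 0$ and $v_k \in [-1,1]$ otherwise, and $z_i = \mathrm{sign}(\hat\beta_i)$ when $\hat\beta_i \ne 0$ and $z_i \in [-1,1]$ otherwise. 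First I would record this identity and partition it into the blocks induced by $\beta = (\beta^{(1)}, \beta^{(2)})$.

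Next I would propose the candidate $\hat\beta = (\hat\beta^{(1)}, 0)$ defined by forcing the active gradient to vanish, $g^{(1)}(\hat\beta) = 0$; since $\hat\beta^{(2)} = 0$ this is $X^{(1)T}(y - X^{(1)}\hat\beta^{(1)}) = 0$, i.e. $\hat\beta^{(1)} = (X^{(1)T}X^{(1)})^{-1}X^{(1)T}y$, the restricted least squares fit on the true variables (well defined since $C_{11}^n$ is positive definite by (b)). Two features make this the right candidate: it does not involve $\lambda_n$, which is exactly what Definition \ref{pseudo} demands; and under (a), $\hat\beta^{(1)} = \beta^{(1)} + (C_{11}^n)^{-1}\frac1n X^{(1)T}\epsilon \to \beta^{(1)}$ in probability, since each coordinate of $\frac1n X^{(1)T}\epsilon$ has variance $O(1/n)$, giving estimation consistency; together with $\hat\beta^{(2)} = 0$ this yields selection consistency once $\hat\beta^{(1)}$ has no vanishing coordinate.

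The core of the argument is verifying $C_n v = \lambda_n W z$ at this candidate. Because $g^{(1)}(\hat\beta) = 0$, the entries $v^{(1)} \in [-1,1]^{p_0}$ are free, while $v^{(2)} = \mathrm{sign}(g^{(2)}(\hat\beta))$ and $z^{(1)} = \mathrm{sign}(\hat\beta^{(1)})$ are fixed, and since $\hat\beta^{(2)} = 0$ the entries $z^{(2)} \in [-1,1]^{p-p_0}$ are free. The first block reads $C_{11}^n v^{(1)} + C_{12}^n v^{(2)} = \lambda_n W_1 z^{(1)}$ and determines $v^{(1)}$; the second reads $\lambda_n W_2 z^{(2)} = C_{21}^n v^{(1)} + C_{22}^n v^{(2)}$ and determines $z^{(2)}$, so optimality reduces to $\|v^{(1)}\|_\infty \le 1$ and $\|z^{(2)}\|_\infty \le 1$. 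The first bound is the main obstacle, and I would prove it by contradiction using $\gamma$: if $\|v^{(1)}\|_\infty = M > 1$ while $\|v^{(2)}\|_\infty \le 1$, then $s = (v^{(1)}, v^{(2)})/M$ lies in $\Omega$, so by (\ref{defgamma}), together with $\gamma_n = \min_{s\in\Omega}\|[C_{11}^n, C_{12}^n]s\|_\infty \to \gamma$ (compactness of $\Omega$ and $C_n \to C$), we get $\|[C_{11}^n, C_{12}^n](v^{(1)}, v^{(2)})^T\|_\infty \ge \gamma_n M$; but the left side equals $\|\lambda_n W_1 z^{(1)}\|_\infty \le \lambda_n a_n$, forcing $M \le \lambda_n a_n/\gamma_n < 1$ for large $n$ once $\lim\lambda_n a_n < \gamma = \tilde a$, a contradiction. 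With $\|v^{(1)}\|_\infty \le 1$ in hand the second bound is immediate, $\|z^{(2)}\|_\infty \le \frac{1}{\lambda_n b_n}\|[C_{21}^n, C_{22}^n](v^{(1)}, v^{(2)})^T\|_\infty \le \frac{\|C_n\|_\infty}{\lambda_n b_n} < 1$ once $\lim\lambda_n b_n > \|C\|_\infty = \tilde b$, using $\|(v^{(1)}, v^{(2)})\|_\infty \le 1$.

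Combining the two blocks, the candidate satisfies the subgradient condition and is therefore a global minimizer for every $\lambda_n$ in the interval $(\|C\|_\infty/b_n, \gamma/a_n)$, which is nonempty precisely when $a_n\|C\|_\infty < \gamma b_n$, and on this interval the minimizer equals the fixed restricted-least-squares vector; this is exactly pseudo-hard thresholding, while consistency follows from the second paragraph. The remaining care is with limiting and uniqueness subtleties: passing from the $\gamma$ built on $C$ to the finite-sample matrices $C_n$, handled by $\gamma_n \to \gamma$ and the strict inequalities; noting that coordinates with $g^{(2)}_k(\hat\beta) = 0$ only enlarge the admissible range of $v^{(2)}$ and so never hurt; and observing that since the program is convex any such certified point is optimal, so even if the polyhedral objective admits several minimizers this one suffices to establish both claims. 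As a consistency check, when $X^TX = I$ one has $\gamma = \|C\|_\infty = 1$, recovering the orthonormal thresholds $\tilde a = \tilde b = 1$ of (\ref{ortholags}).
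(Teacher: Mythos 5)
Your computations are sound, and they use exactly the paper's two ingredients---the bound $\lambda_n b_n > \|C_n\|_\infty$ to control the inactive block, and the constant $\gamma$ of (\ref{defgamma}) (with $\gamma_n \to \gamma$ by compactness of $\Omega$) to control the active block---but you run the KKT system in the opposite direction from the paper, and that reversal opens a genuine gap. You exhibit the restricted least squares point $((X^{(1)T}X^{(1)})^{-1}X^{(1)T}y,\, 0)$ and certify, via a primal--dual witness $(v^{(1)}, z^{(2)})$, that it is \emph{a} global minimizer of (\ref{lags}). The paper instead starts from an \emph{arbitrary} minimizer $\hat\beta$, which must satisfy the stationarity condition (\ref{subgradient}), and shows it is forced to be the restricted least squares point: for $i > p_0$, $\lambda_n w_i|\mathrm{sign}(\hat\beta_i)| = |[C_n v]_i| \le \|C_n\|_\infty < \lambda_n b_n$ forces $\hat\beta_i = 0$; then, if the first block of the gradient at $\hat\beta$ were nonzero, the subgradient sign vector would lie in $\Omega$ and give $\gamma_n \le \lambda_n a_n$, a contradiction. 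That necessity direction yields uniqueness of the minimizer, and uniqueness is what the theorem actually needs: LAGS is defined as the arg min of a piecewise-linear, hence not strictly convex, objective, so the solution set is a polytope that could a priori contain more than one point, and the claims ``LAGS is consistent'' and ``LAGS has the pseudo-hard thresholding property'' (Definition \ref{pseudo}) concern whatever the arg min is, not the existence of one good point inside it. Your closing remark that the certified point ``suffices to establish both claims'' even if several minimizers exist is precisely the unproved step: nothing in your argument rules out a second minimizer that moves with $\lambda_n$ or fails to converge to $\beta^{(1)}$, in which case a solver could return it.

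The gap is fixable inside your framework, but the repair essentially reproduces the paper's proof. First, your $\Omega$-scaling argument actually gives strict dual feasibility: if $\|v^{(1)}\|_\infty = 1$ then $(v^{(1)}, v^{(2)})$ itself lies in $\Omega$, so $\gamma_n \le \|\lambda_n W_1 z^{(1)}\|_\infty \le \lambda_n a_n$, a contradiction; hence $\|v^{(1)}\|_\infty < 1$, and likewise $\|z^{(2)}\|_\infty \le \|C_n\|_\infty/(\lambda_n b_n) < 1$. Strict feasibility of $z^{(2)}$, combined with the standard convexity argument (at a common minimum, the loss and penalty are each affine along the segment joining two minimizers, and a coordinate with $|z_i|<1$ then forces that coordinate to vanish), shows every minimizer is supported on the first block. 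But to pin down the first block you must still show that the gradient's active block vanishes at an \emph{arbitrary} minimizer, which means running the paper's $\Omega$-contradiction on that minimizer's own stationarity condition rather than on your candidate's certificate. Once those two necessity steps are in place, your explicit construction becomes redundant: existence of a minimizer already follows from coercivity of the objective, and necessity alone delivers consistency and the pseudo-hard thresholding property.
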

The proof is given in Section 5. Theorem \ref{Consistency} gives another hint that the weights will play an important role for the effectiveness of (\ref{lags}). To clarify the pseudo-hard thresholding property, it is helpful to interpret LAGS geometrically. $l(\beta;\lambda_n) = \frac{1}{n}\|X^{T}(y - X\beta)\|_1 + \lambda_n\sum_{i = 1}^pw_i(y;X;n)|\beta_i|$ is piecewise linear in $(p + 1)-$dimensional space. We assume there are no flat regions---in which $l(\beta)$ are constant, then $l(\beta)$ must obtain its minimum on some breaking point $\beta'$ otherwise there exist descent directions (consider the simplex algorithm). When $n$ is fixed, as we change $\lambda_n$ with a tiny amount, the value in each breaking point may change, but it is possible that no values on other breaking points catch up $l(\beta')$ after the change; if this is the case, $\beta'$ will still be the minimizer even though the $\lambda_n$ is different. To illustrate this point graphically, let us consider a toy 1-dimensional example $y = (7, 2, 4, 2)^T$, $X = (2, 3, 5, 7)^T$
\[l(\beta) = |7 - 2\beta| + |2 - 3\beta| + |4 - 5\beta| + |2 - 7\beta| + \lambda|\beta|\]
When $\lambda = 1, 2, 5$, $l(\beta;1)$ and $l(\beta;2)$ are both minimized at $\beta = 2 / 3$, while $l(\beta; 5)$ at $\beta = 2/7$ as shown in Figure \ref{one_dim_pht}. It implies even though $\lambda$ increases from 1 to 2, the values at other breaking points do not catch up $l(2/3)$, but when increases to 5, $l(2/3)$ is caught up by $l(2/7)$. The consistency can be understood in a similar fashion: as $n$ increases, the number of breaking points increases exponentially, which provides more candidates to solve (\ref{lags}). As a consequence, the probability to discover the true model increases.
\par The discrete selection nature of LAGS can also be understood under this framework: $\hat{\beta}$ only moves from one breaking point to another and the breaking points are scatterred in $p-$dimensional space, which implies $\hat{\beta}$ is selected discretely. Fortunately, this feat of LAGS can be achieved by solving a tractable convex program rather than enumerating all the possible breaking points as $l_0$ regularized regression.
\begin{figure}
  \center
  \caption{Graphical illustration of pseudo-hard thresholding and discrete selection process in one dimension. The blue line is $\lambda = 1$; the green line $\lambda = 2$; the red line $\lambda = 5$. The optimal $\beta$s are indicated by the dotted vertical line.}
  \includegraphics[width = 4in, height = 2in]{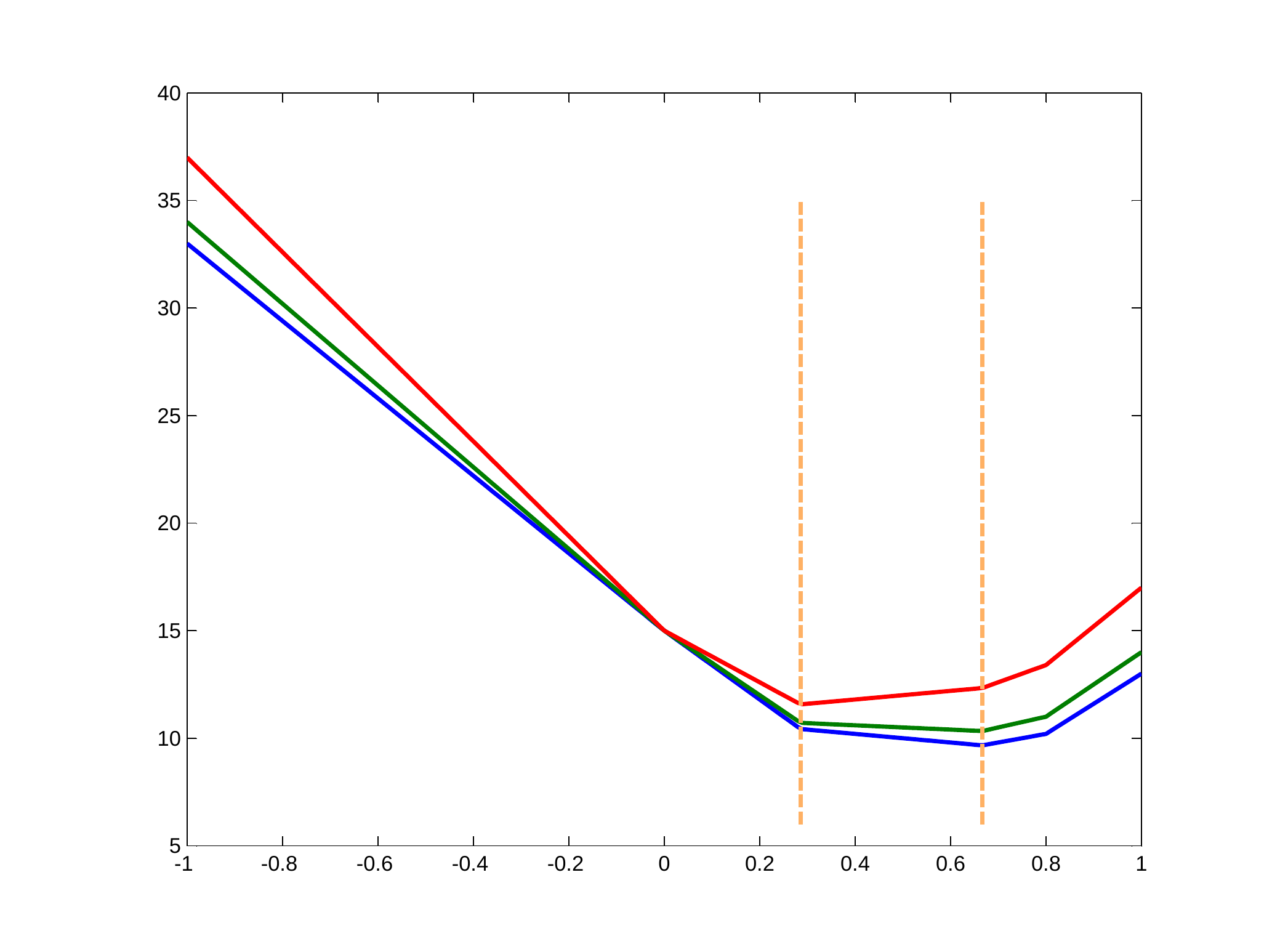}
  \label{one_dim_pht}
\end{figure}
\section{How to choose weights}
\subsection{Dantzig Selector}
The performance of Dantzig Selector (DS) and its similarities with Lasso and LARS are discussed in \cite{Efron}. In their several numerical studies, the coefficient profiles of DS seem to be wilder and have some erratic fluctuations; the prediction accuracy is also inferior. DS can be re-expressed in penalized form as
\begin{equation}
\label{penalizedDS}
\hat{\beta} = \textrm{arg min}\|X^T(y - X^T\beta)\|_{\infty} + \lambda\|\beta\|_1
\end{equation}
We denote the penalized form as $l_{DS}(\beta)$, it is also a piecewise linear function. We argue that one possible reason for these properties in \cite{Efron} is that the penalties on the coefficients are uniform.
\par \emph{Example.} If $l_{DS}(\beta) = \max\{|1 - \beta_1|, |2 - \beta_2|\} + \lambda(|\beta_1| + |\beta_2|)$ by carefully chosen $y$ and $X$. As discussed in Section 3, the piecewise linear function will obtain its minimum at one of its breaking points. In this example, there are total four such points: $(0, 0), (0, 2), (1, 0), (1, 2)$, thus
\[\min l_{DS}(\beta) = \min\{2, 1 + 2\lambda, 2 + \lambda, 3\lambda\}\]
\[\min l_{DS} = \left\{\begin{array}{ll}
2 & \textrm{if } \lambda < 2 / 3, \beta = (0, 0)\\
2 & \textrm{if } \lambda = 2 / 3, \textrm{there are infinit $\beta$s}\\
3\lambda & \textrm{if } \lambda > 2 / 3, \beta = (1, 2)
\end{array}\right.\]
So DS will threshold both $\beta_1$ and $\beta_2$ or keep both intact unless $\lambda = 2 / 3$, where the solution is not unique. LAGS has the similar behavior if the weights are uniform, i.e., if we take $\lambda w_i$s are equal in (\ref{ortholags}). However, Theorem \ref{Consistency} requires the weights converge to different values for true and noisy predictors; hence, we conjecture that if we choose the weights for DS as with LAGS, the counterintuitive behaviors can be eschewed in some extend. We show that numerically in section 8.
\subsection{How to choose $w_i$}
Imposing different weights on the coefficients to enhance predicability is discussed in \cite{Zou2006} and \cite{Wang2007}. Both suggest to use the inverse of ordinary least square estimate as the weight. In the orthonormal case, the adaptive lasso estimates for $\theta$ are obtained by
 \[\hat{\beta}_j^{adaptive} = \textrm{arg}\min_{\beta}\frac{1}{2}(\beta_j^o - \beta)^2 + \lambda\frac{1}{|\beta_j^o|^\gamma}|\beta|\]
 where $j = 1, 2, ..., p$. Therefore, $\hat{\beta}_j^{adaptive} = \textrm{sign}(\beta_j^o)(|\beta_j^o| - \frac{\lambda}{|\beta_j^o|^\gamma})_+$. Compared with Lasso solution (\ref{lassosol}), adaptive lasso shrinks $\beta_j^o$ towards 0 less for larger $\beta_j^o$; as a result, it is shown that it introduces less bias than Lasso. The weight derived in LAD-Lasso \cite{Wang2007} is based on the Bayesian perspective: if each coefficient is double-exponentially distributed with location 0 and scale $\lambda_i$, then the log-likelihood of the posterior is:
\[\sum_{i = 1}^n \log f(\epsilon_i) + \sum_{i = 1}^p\lambda_i|\beta_i| - \log(\lambda_i) + \textrm{constant}\]
minimize it with respect to $\lambda_i$, which leads to $\lambda_i = 1 / |\beta_i|$. However, we do not have the oracle to know $\beta_i$ in advance; hence, at first step, we need a coarse estimate of $\beta_i$, a natural choice will be the $\beta_{OLS}$. Surprisingly, in what follows in this section, it is shown that this is also a good choice for the weights in LAGS.
\begin{thm}\label{thmWeight}
If we choose $w_i = 1 / |\beta_{OLS}|_i$, then asymptotically, conditions for $a_n$ and $b_n$ in Theorem \ref{Consistency} is satisfied, where $\beta_{OLS} = (X^TX)^{-1}X^Ty$.
\end{thm}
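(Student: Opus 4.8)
The plan is to show that the data-adaptive weights $w_i^n = 1/|\beta_{OLS,i}|$ automatically separate the true predictors from the redundant ones as $n\to\infty$: the weights on the true coordinates stay bounded while those on the noise coordinates diverge, so $a_n$ converges to a finite constant and $b_n\to\infty$. Once this separation is in hand, any constant (or suitably slowly vanishing) sequence $\lambda_n$ threads between the two requirements of Theorem \ref{Consistency} simultaneously.

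First I would establish consistency of the ordinary least squares estimator under conditions (a)--(c). Writing $\beta_{OLS} = C_n^{-1}\,\frac1n X^Ty$ and substituting $y = X^{(1)}\beta^{(1)}+\epsilon$ gives $\beta_{OLS} = C_n^{-1}\big(\frac1n X^TX^{(1)}\beta^{(1)} + \frac1n X^T\epsilon\big)$. Here $\frac1n X^TX^{(1)}$ is exactly the first $p_0$ columns of $C_n$; since $C_n\to C$, multiplying $C^{-1}$ by the first $p_0$ columns of $C$ returns the first $p_0$ columns of the identity, so the first term tends to $(\beta^{(1)},\mathbf{0})^T$. The second term satisfies $\frac1n X^T\epsilon\to_p\mathbf{0}$ because it has mean $\mathbf{0}$ and covariance $\frac{\sigma^2}{n}C_n\to\mathbf{0}$. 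Hence $\beta_{OLS}\to_p(\beta^{(1)},\mathbf{0})^T$, i.e. $\beta_{OLS,i}\to_p\beta_i\neq0$ for $i\le p_0$ and $\beta_{OLS,i}\to_p 0$ for $i>p_0$; note also that $\beta_{OLS,i}\neq 0$ almost surely, so each $w_i^n$ is well defined.

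Next I would translate this into the behaviour of $a_n$ and $b_n$. For a true predictor $i\le p_0$, continuity of $x\mapsto 1/|x|$ away from the origin gives $w_i^n\to_p 1/|\beta_i|\in(0,\infty)$, and since $p$ is fixed the maximum over the finitely many true indices converges, $a_n\to_p 1/\min_{i\le p_0}|\beta_i|=:a_\infty<\infty$. For a redundant predictor $i>p_0$, consistency alone forces divergence: for every fixed $M$, $P(1/|\beta_{OLS,i}|>M)=P(|\beta_{OLS,i}|<1/M)\to 1$, so $w_i^n\to_p\infty$; taking the minimum over the finitely many redundant indices yields $b_n\to_p\infty$. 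Finally, because $\gamma>0$ (shown just below (\ref{defgamma})) and $\min_{i\le p_0}|\beta_i|>0$, the interval $(0,\gamma\min_{i\le p_0}|\beta_i|)=(0,\tilde a/a_\infty)$ is nonempty; choosing any constant $\lambda_n\equiv\lambda_0$ in it gives $\lambda_n a_n\to_p\lambda_0 a_\infty<\gamma=\tilde a$ while $\lambda_n b_n\to_p\infty>\|C\|_\infty=\tilde b$, so both hypotheses of Theorem \ref{Consistency} hold with probability tending to one.

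I expect the delicate point to be not any single limit but the \emph{joint} reconciliation of the two-sided requirement with one weight scheme and one $\lambda_n$. This succeeds precisely because OLS consistency splits the weights into a bounded group (true coordinates) and a divergent group (noise coordinates); the divergence of $b_n$ must be argued in probability rather than pointwise, and the well-definedness of $w_i^n$ must be checked. The sharper rate $|\beta_{OLS,i}|=O_p(n^{-1/2})$ is not needed for the constant choice above, but it is what permits the alternative regime $\lambda_n\to 0$ with $\sqrt{n}\,\lambda_n\to\infty$, under which the first limit is $0<\tilde a$ and the second still diverges.
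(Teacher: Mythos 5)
Your proof is correct and takes essentially the same route as the paper's: establish consistency of $\beta_{OLS}$ from conditions (a)--(b) via the normal equations, deduce that the weights $1/|\beta_{OLS}|_i$ stay bounded on the true coordinates while diverging on the redundant ones, and then choose $\lambda_n$ threading between the two conditions of Theorem \ref{Consistency}. Your write-up is in fact more explicit than the paper's, which stops at ``when $n$ is large enough, we can choose $\lambda_n$'' without exhibiting the constant choice $\lambda_0\in(0,\tilde a/a_\infty)$ or handling the convergence-in-probability details you address.
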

What if $p > n$ which is not addressed in \cite{Wang2007,Zou2006}? The number of variables $p$ is larger than the sample size $n$ frequently arises in applications. The ordinary least square fails because the solution of it is not unique. Nevertheless, ridge regression is a shrinkage estimator that can enhance the predictability of the model, so we use ridge solution as the weights for LAGS; moreover, the ridge regression solution can be obtained by simply solving the similar equation, namely $\beta_{ridge} = (X^TX + \phi I)^{-1}y$, where $\phi$ is some positive constant; then $w_i = 1 / |\beta^{ridge}|_i$.
\par There are two nice properties associated with our choice of $w_i$:
\begin{itemize}
    \item[(1)] Without loss of generality, let us consider the scenario that there are several groups of identical predictors, the ridge estimate of $\beta$ will always put equal weights on the identical predictors. The reason is that the optimization problem
        \begin{equation*}
          \min \alpha_1^2 + \alpha_2^2 + ... + \alpha_k^2\quad\quad\quad \textrm{subject to}\quad \alpha_1 + \alpha_2 + ... + \alpha_k = \eta
        \end{equation*}
        will obtain its optimal if and only if $\alpha_1 = \alpha_2 = ... = \alpha_k = \eta / k$. This is desirable since we do not bias towards any predictors.
    \item[(2)] LAGS is a shrinkage estimator in the sense that
    \[\sum_{i = 1}^p \frac{|\hat{\beta}|_i}{|\beta_{OLS}|_i} \leq p\]
    It is easily followed by the fact that $ l(\hat{\beta};\lambda)\leq l(\beta_{OLS};\lambda)$ and $X^T(y - X\beta_{OLS}) = 0$, thus
    \[\frac{1}{n}\|X^T(y - X\hat{\beta})\|_1 + \lambda\sum_{i = 1}^p \frac{|\hat{\beta}|_i}{|\beta_{OLS}|_i}\leq \lambda p\]
    \end{itemize}
    \par Suppose now that $\epsilon \sim \mathcal{N}(0, \sigma^2)$, our next result is non-asymptotic, which claims that under the choice $w_i = 1 / |\beta_{OLS}|_i$, LAGS can accurately identify the underlying model and estimate the coefficients.
    \begin{thm}\label{thm42}
      Under the choice of $w_i = 1 / |\beta_{OLS}|_i$ and suppose that
      \begin{equation}\label{standout_noise}
        \min_{i\in\{1, 2, ..., p_0\}}|\beta_i| > c\sqrt{2\log p}\sigma
      \end{equation}
      \begin{equation}\label{norm_bound}
        \frac{\|C_n\|_{\infty}}{\gamma_n}\leq M
      \end{equation}
      where $\gamma_n = \min_{s\in\Omega}\{\|[C_{11}^n, C_{12}^n](s_1, s_2)^T\|_{\infty}\}$ as in (\ref{defgamma}). If $\xi > 0$ satisfies $(c - \xi) / \xi > M$, then we can choose
      \begin{equation}\label{bound_of_lambda}
        \xi\sqrt{2\log p}\sigma\|C_n\|_{\infty}\leq\lambda\leq (c - \xi)\sqrt{2\log p}\sigma\gamma_n
      \end{equation}
      such that
      \begin{eqnarray}
        \hat{\beta}_i &=& 0, i = p_0 + 1, ..., p\\
        \hat{\beta}_i &=& (\beta_{OLS})_i, i = 1, ..., p_0\\
        \|\hat{\beta} - \beta\|_2^2&\leq& 2\xi^2\cdot p_0 \cdot \log p\cdot\sigma^2\label{estimate}
      \end{eqnarray}
      are satisfied with probability at least $(1 - \pi^{-1/2}\xi^{-1}(n\log p)^{-1/2}\kappa p^{-n\xi^2 / \kappa^2})^p$, where $\kappa$ is a constant dependent on $C_n^{-1/2}$.
    \end{thm}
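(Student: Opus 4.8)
The plan is to read Theorem \ref{thm42} as the finite-sample realization of Theorem \ref{Consistency}: the choice $w_i = 1/|(\beta_{OLS})_i|$ turns the abstract hypotheses $\lim\lambda_n a_n < \gamma$ and $\lim\lambda_n b_n > \|C\|_\infty$ into the explicit interval \eqref{bound_of_lambda}, provided the data-dependent weights behave as expected. So I would first isolate a high-probability event $\mathcal{E}$ on which the least squares coefficients are well separated, and then, on $\mathcal{E}$, certify an explicit candidate minimizer through the subgradient (KKT) conditions of the convex program \eqref{lags}. The candidate I would certify is the vector $\hat\beta$ supported on $\{1,\dots,p_0\}$ whose signal block annihilates the corresponding block of the gradient residual $u(\beta) = X^T(y - X\beta)$, i.e.\ $u^{(1)}(\hat\beta) = 0$ and $\hat\beta^{(2)} = 0$; under condition (a) this signal block is least squares on the selected columns, which is $(\beta_{OLS})_i$ up to the fluctuation that $\mathcal{E}$ controls.

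For the deterministic core, I would write optimality as $0 = -\tfrac1n X^TX v + \lambda W s$ with $W = \mathrm{diag}(w_i)$, where $v\in\partial\|u(\hat\beta)\|_1$ and $s$ is the weighted sign vector of $\hat\beta$. Because $u^{(1)}(\hat\beta) = 0$, the entries of $v^{(1)}$ are free in $[-1,1]$ while $v^{(2)} = \mathrm{sign}(u^{(2)})$ is pinned down; this freedom is exactly what lets a certificate exist over a range of $\lambda$. Splitting into blocks and using $X^TX = nC_n$, the signal equations read $C_{11}^n v^{(1)} + C_{12}^n v^{(2)} = \lambda W^{(1)} s^{(1)}$, which I solve for $v^{(1)}$ and bound via \eqref{defgamma}: $\gamma_n\|v^{(1)}\|_\infty \le \lambda\max_{i\le p_0} w_i = \lambda a_n$, so $\|v^{(1)}\|_\infty \le 1$ reduces to $\lambda a_n \le \gamma_n$ (with condition (c) guaranteeing strict feasibility at $\lambda=0$). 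The noise block is the inequality $|(C_{21}^n v^{(1)} + C_{22}^n v^{(2)})_i| \le \lambda w_i$, whose left side is at most $\|C_n\|_\infty$ since $\|v\|_\infty\le1$, so it reduces to $\lambda b_n \ge \|C_n\|_\infty$. Thus optimality is equivalent to the two scalar inequalities $\lambda a_n \le \gamma_n$ and $\lambda b_n \ge \|C_n\|_\infty$, and the hypothesis $(c-\xi)/\xi > M$ together with \eqref{norm_bound} is precisely what keeps the interval \eqref{bound_of_lambda} nonempty.

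It remains to show that on $\mathcal{E}$ the weights satisfy $a_n \le 1/((c-\xi)\sqrt{2\log p}\,\sigma)$ and $b_n \ge 1/(\xi\sqrt{2\log p}\,\sigma)$, so that \eqref{bound_of_lambda} implies the two inequalities above. For this I would whiten the noise: since $X^T\epsilon \sim \mathcal{N}(0,\sigma^2 nC_n)$, the vector $Z = (nC_n)^{-1/2}X^T\epsilon/\sigma$ is standard normal with independent coordinates, and $\beta_{OLS}-\mathbb{E}\beta_{OLS} = \tfrac{\sigma}{\sqrt n}C_n^{-1/2}Z$, where by condition (a) $\mathbb{E}\beta_{OLS} = (\beta^{(1)},0)$. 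Taking $\mathcal{E} = \{\|Z\|_\infty \le t\}$ with $t = \xi\sqrt{2n\log p}/\kappa$ and $\kappa = \|C_n^{-1/2}\|_\infty$, independence gives $\mathbb{P}(\mathcal{E}) = (1 - \mathbb{P}(|Z_1|>t))^p$, and the Gaussian tail $\mathbb{P}(|Z_1|>t)\le \sqrt{2/\pi}\,t^{-1}e^{-t^2/2}$ reproduces exactly the stated bound $(1 - \pi^{-1/2}\xi^{-1}(n\log p)^{-1/2}\kappa\, p^{-n\xi^2/\kappa^2})^p$. On $\mathcal{E}$ every coordinate obeys $|(\beta_{OLS})_i - (\mathbb{E}\beta_{OLS})_i| \le \tfrac{\sigma}{\sqrt n}\kappa t = \xi\sqrt{2\log p}\,\sigma$; combined with the $\beta$-min hypothesis \eqref{standout_noise} this forces $|(\beta_{OLS})_i| > (c-\xi)\sqrt{2\log p}\,\sigma$ at signal indices and $|(\beta_{OLS})_i| < \xi\sqrt{2\log p}\,\sigma$ at noise indices, which is the required control of $a_n$ and $b_n$.

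Finally, the error bound is immediate on $\mathcal{E}$: since $\hat\beta^{(2)} = \beta^{(2)} = 0$ and $\hat\beta^{(1)}$ differs from $\beta^{(1)}$ only by the controlled fluctuation, $\|\hat\beta-\beta\|_2^2 = \sum_{i\le p_0}|(\beta_{OLS})_i - \beta_i|^2 \le p_0(\xi\sqrt{2\log p}\,\sigma)^2 = 2\xi^2 p_0\log p\,\sigma^2$, which is \eqref{estimate}. The step I expect to be the main obstacle is the deterministic certificate: constructing $v^{(1)}$ with $\|v^{(1)}\|_\infty\le1$ and verifying the noise-block inequality cleanly amounts to re-proving the dual certificate of Theorem \ref{Consistency} while carrying the finite-$n$ quantities $\gamma_n$ and $\|C_n\|_\infty$, and it is here that condition (c) and the definition \eqref{defgamma} must be used sharply; one must also confirm uniqueness of the certified point (no flat directions) so that the support and values are pinned down rather than merely admissible.
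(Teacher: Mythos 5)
Your proposal is correct and takes essentially the paper's own route: your event $\mathcal{E}$ (whitening by $C_n^{-1/2}$, independence of the whitened coordinates, and the Mills-ratio tail bound with $\kappa=\|C_n^{-1/2}\|_\infty$) is exactly the paper's Lemmas \ref{lemma1} and \ref{lemma2}, and your deterministic block-subgradient step, with $\gamma_n$ controlling the signal rows and $\|C_n\|_\infty$ the noise rows so that \eqref{bound_of_lambda} implies $\lambda a_n\leq\gamma_n$ and $\lambda b_n\geq\|C_n\|_\infty$, is the same argument the paper imports from Theorem \ref{Consistency}. The only nuance is direction: you build a primal-dual certificate for a candidate and must therefore address uniqueness separately (as you note), whereas the paper applies the subgradient conditions to an arbitrary minimizer and rules out a nonzero signal-block residual by contradiction via $\Omega$, which pins down the support and values without a separate uniqueness step.
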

    In words, the nonzero coefficients should significantly stand above the noise as indicated by (\ref{standout_noise}) and $\|C_n\|_{\infty}/\gamma_n$ is uniformly bounded by $M$ as in (\ref{norm_bound}), $\xi$ is chosen to make the set (\ref{bound_of_lambda}) nonempty. If all these conditions are satisfied, LAGS identifies the correct variables and only these with large probability. Moreover, the coefficients of identified variables are set to be the OLS estimate, which is analogous to hard thresholding. The accuracy of LAGS is quantified by (\ref{estimate}), the mean squared error is proportional to the true number of variables times the variance of the noise with the logarithmic factor---$\log p$, which is unavoidable since we do not have the oracle to know the set of true predictors in advance \cite{Candes2009,Candes2007}.
\section{Proofs of Theorems}
\subsection{Proof of Theorem \ref{Consistency}}
\begin{proof}
  LAGS is a convex program, thus it obtains its minimum at some $\hat{\beta}$. By the theory of convex optimization, the subgradient at $\hat{\beta}$ is zero
  \begin{equation}
    \label{subgradient}
    \begin{split}
    \nabla l(\hat{\beta}) =& -\frac{X^TX}{n}\textrm{sign}(X^Ty - X^TX\hat{\beta}) \\
    &+ \lambda_n(w_1\textrm{sign}(\hat{\beta}_1), ..., w_{p_0}\textrm{sign}(\hat{\beta}_{p_0}), w_{p_0 + 1}\textrm{sign}(\hat{\beta}_{p_0 + 1}), ..., w_p\textrm{sign}(\hat{\beta}_p))^T\\
    =& 0
    \end{split}
  \end{equation}
  where $\textrm{sign}(x) = \frac{x}{|x|}$ for $x\neq 0$ and $\textrm{sign}(x) \in [-1, 1]$ for $x = 0$. We take $\tilde{b} = \|C\|_{\infty}$. Since $b = \lim \lambda_n b_n > \|C\|_{\infty}$, for any $\delta\in(0, (b - \tilde{b}))$, there exits $N_1(\delta)$ and $n > N_1(\delta)$ such that $\|C\|_{\infty} + \delta < \lambda_n b_n$. Similarly, since $C_n\rightarrow C$ as in condition (b), there exits $N_2(\delta)$ and $n > N_2(\delta)$ such that $\|C_n\|_{\infty}\leq \|C\|_{\infty} + \delta$.
  \par The optimal condition (\ref{subgradient}) implies
  \begin{equation}
    \label{thm31redun}
    \Big[\frac{X^TX}{n}\textrm{sign}(X^Ty - X^TX\hat{\beta})\Big]_i = \lambda_n w_i\textrm{sign}(\hat{\beta}_i)
  \end{equation}
  Notice that when $p_0 + 1 \leq i\leq p$, \[\lambda_n b_n \leq\lambda_n w_i\] and \[\Big[\frac{X^TX}{n}\textrm{sign}(X^Ty - X^TX\hat{\beta})\Big]_i\leq\|\frac{X^TX}{n}\|_{\infty}\]
  Hence \begin{equation}\label{PHTcond1}\lambda_n b_n |\textrm{sign}(\hat{\beta}_i)| \leq\|\frac{X^TX}{n}\|_{\infty}\end{equation}
  for $n > N(\delta) = \max\{N_1(\delta), N_2(\delta)\}$, we have $|\textrm{sign}(\hat{\beta}_i)| < 1$, which implies $\hat{\beta}_i = 0$.
  \par Analogously, we take $\tilde{a} = \gamma$, where $\gamma$ is defined in Section 3; when $1\leq i\leq p_0$ and since $a = \lim\lambda_n a_n < \tilde{a}$, for any $\delta'\in(0, (\tilde{a} - a))$, there exists $N(\delta')$ and $n > N(\delta')$ such that $\min_{s\in\Omega}\{\|[C_{11}^n, C_{12}^n]s\|_{\infty}\} > \gamma - \delta'$ and $\lambda_n a_n \leq \gamma - \delta'$. Choose $n > N = \max\{N(\delta), N(\delta')\}$, we have $\hat{\beta}^{(2)} = 0$, then
  \begin{equation}
  \begin{split}
  \textrm{sign}(X^Ty - X^TX\hat{\beta}) &= \textrm{sign}\left(\left(\begin{array}{c}X^{(1)^T}y\\X^{(2)^T}y\end{array}\right)-\left(\begin{array}{c}X^{(1)^T}X^{(1)}\hat{\beta}^{(1)}\\X^{(2)^T}X^{(1)}\hat{\beta}^{(1)}\end{array}\right)\right)\\
  &=\textrm{sign}\left(\left(\begin{array}{c}X^{(1)^T}\epsilon\\X^{(2)^T}\epsilon\end{array}\right)-\left(\begin{array}{c}X^{(1)^T}X^{(1)}(\hat{\beta}^{(1)} - \beta^{(1)})\\X^{(2)^T}X^{(1)}(\hat{\beta}^{(1)} - \beta^{(1)})\end{array}\right)\right)
  \end{split}
  \end{equation}
  We claim that $X^{(1)^T}\epsilon - X^{(1)^T}X^{(1)}(\hat{\beta}^{(1)} - \beta^{(1)}) = 0$.
  \par If it is not true, then \[s' = \textrm{sign}(X^T\epsilon - X^TX(\hat{\beta} - \beta))\in\Omega\] which upon combining with (\ref{thm31redun}) gives
  \begin{equation}\label{PHTcond2}\gamma - \delta' < \|[C_{11}, C_{12}]s'\|_{\infty} < \lambda_n a_n\end{equation}
  contradicts with our choice of $n$.\\
  We impose the superscript $n$ on $\hat{\beta}$ to make it explicitly dependent on $n$. Therefore, for $n > N$,
  \[\frac{1}{n}\|X^{(1)^T}\epsilon - X^{(1)^T}X^{(1)}(\hat{\beta}^{n^{(1)}} - \beta^{(1)})\|_{1} = 0\]
  Combining $\mathbb{E}(\epsilon) = 0$ and the law of large numbers,
  \[\frac{X^{(1)^T}\epsilon}{n}\rightarrow 0\]
  by condition (b)
  \[\frac{X^{(1)^T}X^{(1)}}{n}\rightarrow C_{11}\]
  hence,
  \[\hat{\beta}^{n^{(1)}}\rightarrow \beta^{(1)}\]
  which implies LAGS is consistent.
  \par The pseudo-hard thresholding property holds for $n > N$ as well. In fact, as shown above, if inequalities (\ref{PHTcond1}) and (\ref{PHTcond2}) are satisfied, then we have
  \begin{equation}
    \hat{\beta}^{(1)} = (X^{(1)^T}X^{(1)})^{-1}X^{(1)^T}y
  \end{equation}
  and
  \begin{equation}
    \hat{\beta}^{(2)} = 0
  \end{equation}
  Therefore, for any sequence $\{\lambda_n w_1, \lambda_n w_2, ..., \lambda_n w_p\}$, if $\lambda_n a_n\leq\tilde{a} - \delta'$ and $\lambda_n b_n\geq\tilde{b} + \delta$, then the LAGS solutions are the same.
\end{proof}
\subsection{Proof of Theorem \ref{thmWeight}}
\begin{proof}
The normal equation is
\begin{equation}
\label{normal}
X^TX\beta = X^Ty
\end{equation}
where $y = X^{(1)}\beta^{(1)} + \epsilon$. By condition (a), (b),
\[\frac{1}{n}X^TX \rightarrow C, \frac{1}{n}X^T\epsilon\rightarrow 0, \frac{1}{n}X^TX^{(1)}\beta^{(1)}\rightarrow\left(\begin{array}{c}
C_{11}\\
C_{21}
\end{array}\right)\beta^{(1)}\]
so
\[\hat{\beta}^{(1)} \rightarrow \beta^{(1)}, \hat{\beta}^{(2)}\rightarrow \beta^{(2)} = 0\]
Hence, when $n$ is large enough, we can choose $\lambda_n$, such that $\lambda_n/|\beta_{OLS}|_i < \gamma$ for $i = 1, 2, ..., p_0$ and $\lambda_n/|\beta_{OLS}|_i > \|C\|_{\infty}$ for $i = p_0 + 1, ..., p$.
\end{proof}
\subsection{Proof of Theorem \ref{thm42}}
In order to prove this theorem, we need the following lemmas.
\begin{lem}\label{lemma1}
  If $z\sim \mathcal{N}(0, \Sigma)$, $z\in \mathcal{R}^p$ and $\|\Sigma^{1 / 2}\|_{\infty}\leq c^{-1}$, then the probability
  \[\mathbb{P}(\|z\|_{\infty} \leq t)\geq \Big(1 - \frac{2\phi(ct)}{ct}\Big)^p\]
  where $\phi(t) = (2\pi)^{-1/2}\exp(-t^2/2)$.
\end{lem}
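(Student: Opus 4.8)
The plan is to reduce the joint tail bound on $\|z\|_\infty$ to a product of one-dimensional Gaussian tail bounds, exploiting the $l_\infty$ norm structure. Since $\|z\|_\infty \leq t$ means that \emph{every} coordinate $z_i$ satisfies $|z_i| \leq t$, I would first write
\begin{equation*}
\mathbb{P}(\|z\|_\infty \leq t) = \mathbb{P}\Big(\bigcap_{i=1}^p \{|z_i| \leq t\}\Big).
\end{equation*}
The immediate obstacle is that the coordinates $z_i$ are \emph{not} independent under a general covariance $\Sigma$, so the intersection does not factor directly. My strategy to circumvent this is to control each marginal $z_i \sim \mathcal{N}(0, \Sigma_{ii})$ separately and then argue that a suitable correlation inequality (Gaussian positive-orthant/Sidak-type inequality) lets me lower bound the intersection probability by the product of the marginal event probabilities. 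Sidak's inequality states precisely that for a centered Gaussian vector, $\mathbb{P}(\bigcap_i \{|z_i| \leq t_i\}) \geq \prod_i \mathbb{P}(|z_i| \leq t_i)$, which is exactly the direction I need and requires no independence assumption.

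Next I would bound each factor $\mathbb{P}(|z_i| \leq t)$ from below. The hypothesis $\|\Sigma^{1/2}\|_\infty \leq c^{-1}$ controls the marginal standard deviations: since $\Sigma_{ii} = \|(\Sigma^{1/2})_{i\cdot}\|_2^2$ and the $l_\infty$ operator norm bounds row sums, I expect to extract $\sigma_i := \sqrt{\Sigma_{ii}} \leq c^{-1}$, so that $|z_i|/\sigma_i$ is standard normal with $t/\sigma_i \geq ct$. Writing the complementary tail and using the standard Gaussian tail estimate $\mathbb{P}(|W| > u) \leq 2\phi(u)/u$ for $W \sim \mathcal{N}(0,1)$ and $u > 0$, I get
\begin{equation*}
\mathbb{P}(|z_i| \leq t) = 1 - \mathbb{P}\Big(|W| > \tfrac{t}{\sigma_i}\Big) \geq 1 - \frac{2\phi(ct)}{ct},
\end{equation*}
where the last step uses monotonicity of $u \mapsto 2\phi(u)/u$ together with $t/\sigma_i \geq ct$. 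Multiplying these $p$ identical lower bounds via Sidak yields the claimed $\big(1 - 2\phi(ct)/ct\big)^p$.

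The main subtlety to verify carefully is the two-step reduction from the operator-norm hypothesis to the bound $\sigma_i \leq c^{-1}$, and the monotonicity argument that replaces each $t/\sigma_i$ by the uniform lower value $ct$ inside the decreasing tail function. I would double-check that $2\phi(u)/u$ is indeed decreasing on the relevant range so that a larger argument $t/\sigma_i$ gives a \emph{smaller} tail and hence a valid lower bound on $\mathbb{P}(|z_i| \leq t)$; this is where an incautious inequality could flip direction. Granting Sidak's inequality as a known tool, the remainder is routine, so I expect the correlation inequality and the monotonicity check to be the only places demanding genuine care.
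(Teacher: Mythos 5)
Your proof is correct, but it follows a genuinely different route from the paper's. The paper whitens the vector by writing $z = \Sigma^{1/2}v$ with $v\sim\mathcal{N}(0,I)$, and then uses the region inclusion $\{\|v\|_{\infty}\leq ct\}\subseteq\{\|\Sigma^{1/2}v\|_{\infty}\leq t\}$, which follows from $\|\Sigma^{1/2}v\|_{\infty}\leq\|\Sigma^{1/2}\|_{\infty}\|v\|_{\infty}$; after that, the probability factors \emph{exactly} because the coordinates of $v$ are independent, and the one-dimensional tail bound $\int_t^{\infty}\phi(s)\,ds\leq\phi(t)/t$ finishes the argument. You instead keep the dependent coordinates $z_i$ and decouple them with \v{S}id\'{a}k's inequality, then bound each marginal via $\Sigma_{ii}=\|(\Sigma^{1/2})_{i\cdot}\|_2^2\leq\|(\Sigma^{1/2})_{i\cdot}\|_1^2\leq\|\Sigma^{1/2}\|_{\infty}^2\leq c^{-2}$. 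Both arguments are valid. The trade-off: the paper's proof is entirely self-contained and elementary, needing nothing beyond the submultiplicativity of the induced $\infty$-norm and independence of standard Gaussian coordinates, whereas yours imports a classical but nontrivial correlation inequality. In exchange, your argument only ever uses the weaker hypothesis $\max_i \Sigma_{ii}\leq c^{-2}$ (row $\ell_2$ norms rather than row $\ell_1$ sums), so it actually establishes a slightly stronger statement than the lemma as written; the paper's region-inclusion step genuinely needs the full operator-norm bound. One small simplification to your write-up: you do not need monotonicity of $u\mapsto 2\phi(u)/u$ at all --- since $t/\sigma_i\geq ct$, monotonicity of the tail probability itself gives $\mathbb{P}(|W|>t/\sigma_i)\leq\mathbb{P}(|W|>ct)\leq 2\phi(ct)/(ct)$ directly, which removes the one step you flagged as delicate.
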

\begin{proof}
  Suppose $v\sim\mathcal{N}(0, I)$ and $z = \Sigma^{1/2}v$,
  \begin{eqnarray}
    &&\int_{\|z\|_{\infty}\leq t}\frac{1}{(2\pi)^{p/2}|\Sigma|^{1/2}}\exp(-\frac{1}{2}z^T\Sigma^{-1}z)dz \\
    &=& \int_{\|\Sigma^{1/2}v\|_{\infty}\leq t}\frac{1}{(2\pi)^{p/2}}\exp(-\frac{1}{2}v^Tv)dv\\
    &\geq& \int_{\|v\|_{\infty} \leq ct}\frac{1}{(2\pi)^{p/2}}\exp(-\frac{1}{2}v^Tv)dv\\
    &\geq& \Big(1 - \frac{2\phi(ct)}{ct}\Big)^p
  \end{eqnarray}
  Here we apply the fact that $\int_{t}^{\infty}\phi(t)dt \leq \phi(t)/t$ for $t > 0$.
\end{proof}
\begin{lem}\label{lemma2}
  Suppose $\|C_n^{-1/2}\|_{\infty}\leq \kappa$, then $|\beta_{OLS}|_i \geq (c-\xi)\sqrt{2\log p}\sigma$ for $i = 1, ..., p_0$ and $|\beta_{OLS}|_i \leq \xi\sqrt{2\log p}\sigma$ for $i = p_0 + 1, ..., p$ are satisfied with probability at least $\Big(1 - \frac{2\phi(\xi\sqrt{2n\log p} / \kappa)}{\xi\sqrt{2n\log p} / \kappa}\Big)^p$ = $(1 - \pi^{-1/2}\xi^{-1}(n\log p)^{-1/2}\kappa p^{-n\xi^2 / \kappa^2})^p$.
\end{lem}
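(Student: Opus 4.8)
The plan is to reduce the statement to a single concentration bound on the Gaussian noise component of $\beta_{OLS}$ and then invoke Lemma \ref{lemma1} directly. First I would substitute the model $y = X^{(1)}\beta^{(1)} + \epsilon$ into $\beta_{OLS} = (X^TX)^{-1}X^Ty$ and observe that the signal part is exactly unbiased for the true parameter. Since $X^TX^{(1)}$ is just the first $p_0$ columns of $X^TX$, we have $(X^TX)^{-1}X^TX^{(1)} = (I_{p_0}, 0)^T$, so that $\beta_{OLS} = \beta + \delta$ with $\beta = (\beta^{(1)}, 0)$ the true coefficient vector and $\delta := (X^TX)^{-1}X^T\epsilon$ carrying all the randomness. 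The crucial point is that the deterministic part of $\beta_{OLS}$ collapses to $\beta$ with no contamination from the null columns $X^{(2)}$.

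Next I would identify the law of $\delta$. Since $\epsilon \sim \mathcal{N}(0, \sigma^2 I_n)$, the vector $\delta$ is Gaussian with mean $0$ and covariance $\sigma^2 (X^TX)^{-1} = \frac{\sigma^2}{n} C_n^{-1}$, hence its matrix square root is $\Sigma^{1/2} = \frac{\sigma}{\sqrt{n}} C_n^{-1/2}$ and, under the hypothesis $\|C_n^{-1/2}\|_\infty \leq \kappa$, we get $\|\Sigma^{1/2}\|_\infty \leq \frac{\sigma\kappa}{\sqrt{n}}$. This is precisely the form required by Lemma \ref{lemma1}, with the constant there (call it $c'$ to avoid collision) taken as $c' = \sqrt{n}/(\sigma\kappa)$; note that this $c'$ is not the $c$ of condition (\ref{standout_noise}), and keeping those two symbols separate is the one place where care is needed.

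Then I would apply Lemma \ref{lemma1} to $\delta$ with threshold $t = \xi\sqrt{2\log p}\,\sigma$. A direct computation gives $c't = \xi\sqrt{2n\log p}/\kappa$, so Lemma \ref{lemma1} yields $\mathbb{P}(\|\delta\|_\infty \leq \xi\sqrt{2\log p}\,\sigma) \geq (1 - 2\phi(c't)/(c't))^p$. Substituting $\phi(u) = (2\pi)^{-1/2} e^{-u^2/2}$ and using $e^{-(c't)^2/2} = p^{-n\xi^2/\kappa^2}$ converts this into the closed form $(1 - \pi^{-1/2}\xi^{-1}(n\log p)^{-1/2}\kappa\, p^{-n\xi^2/\kappa^2})^p$ asserted in the statement; this is the routine algebraic simplification of the Gaussian tail.

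Finally, I would condition on the event $\{\|\delta\|_\infty \leq \xi\sqrt{2\log p}\,\sigma\}$ and split by index range. For $i > p_0$ the true coefficient is $0$, so $|(\beta_{OLS})_i| = |\delta_i| \leq \|\delta\|_\infty \leq \xi\sqrt{2\log p}\,\sigma$, which is the upper bound. For $i \leq p_0$ the triangle inequality together with condition (\ref{standout_noise}) gives $|(\beta_{OLS})_i| \geq |\beta_i| - |\delta_i| \geq (c - \xi)\sqrt{2\log p}\,\sigma$, which is the lower bound. I expect the only genuinely delicate step to be the first one, namely verifying that the OLS signal term reduces exactly to $\beta$; once that and the covariance identification are in place, the remainder is bookkeeping and the application of Lemma \ref{lemma1}.
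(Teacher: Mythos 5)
Your proposal is correct and follows essentially the same route as the paper: decompose $\beta_{OLS} = \beta + \zeta$ with $\zeta = (X^TX)^{-1}X^T\epsilon \sim \mathcal{N}(0, \sigma^2 C_n^{-1}/n)$, apply Lemma \ref{lemma1} with $t = \xi\sqrt{2\log p}\,\sigma$ and constant $\sqrt{n}/(\sigma\kappa)$, and simplify the Gaussian tail. In fact your writeup is slightly more complete than the paper's, which stops at the bound $\mathbb{P}(\|\zeta\|_\infty \leq \xi\sqrt{2\log p}\,\sigma)$ and leaves the final triangle-inequality split (using condition (\ref{standout_noise})) implicit.
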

\begin{proof}
  Since $y = X^{(1)}\beta^{(1)} + \epsilon$
  \begin{eqnarray}
    \beta_{OLS} &=& (X^TX)^{-1}X^Ty\\
    &=&\left(\begin{array}{c}
      \beta^{(1)}\\
      0
    \end{array}\right) + \left(\begin{array}{cc}
      X^{(1)^T}X^{(1)} & X^{(1)^T}X^{(2)}\\
      X^{(2)^T}X^{(1)} & X^{(2)^T}X^{(2)}
    \end{array}\right)^{-1}
    \left(\begin{array}{c}
    X^{(1)^T}\epsilon\\
    X^{(2)^T}\epsilon
    \end{array}\right)\\
  \end{eqnarray}
  Denote $\zeta = \left(\begin{array}{cc}
      X^{(1)^T}X^{(1)} & X^{(1)^T}X^{(2)}\\
      X^{(2)^T}X^{(1)} & X^{(2)^T}X^{(2)}
    \end{array}\right)^{-1}
    \left(\begin{array}{c}
    X^{(1)^T}\epsilon\\
    X^{(2)^T}\epsilon
    \end{array}\right)$, which is a Gaussian random vector with mean 0 and variance $C_n^{-1}\sigma^2/n$. Hence by applying Lemma \ref{lemma1} and noting $\|C_n^{-1/2}\sigma/\sqrt{n}\|_{\infty} \leq \kappa\sigma / \sqrt{n}$
    \[\mathbb{P}(\|\zeta\|_{\infty}\leq \xi\sqrt{2\log p}\sigma) \geq \Big(1 - \frac{2\phi(\xi\sqrt{2n\log p} / \kappa)}{\xi\sqrt{2n\log p} / \kappa}\Big)^p\]
\end{proof}
\par Theorem \ref{thm42} is a consequence of the two lemmas.
\begin{proof}
  Since $X^T(y - X\beta_{OLS}) = 0$, it implies
  \[l(\beta) = \frac{1}{n}\|X^Ty - X^TX\beta\|_1 + \lambda\sum_{i = 1}^p w_i|\beta_i| = \frac{1}{n}\|X^TX(\beta_{OLS} - \beta)\| + \lambda\sum_{i = 1}^p w_i|\beta_i|\]
  Choosing $\lambda$ satisfying (\ref{bound_of_lambda}) and applying Lemma \ref{lemma1} and Lemma \ref{lemma2}, we take the subgradient of $l(\beta)$ and use the same arguments as in the proof of Theorem \ref{Consistency}. With large probability, we have
  \[\hat{\beta}_i = 0, i = p_0 + 1, ..., p\]
  and
  \[\Big(X^{(1)^T}X^{(1)}, X^{(1)^T}X^{(2)}\Big)(\hat{\beta}^{(1)} - \beta_{OLS}^{(1)}) = 0\]
  which implies
  \[\hat{\beta}_i = (\beta_{OLS})_i, i = 1, ..., p_0\]
  Hence
  \begin{eqnarray}
  \|\hat{\beta} - \beta\|_2^2 &=& \left\|\left(\begin{array}{c}
    \beta_{OLS}^{(1)}\\
    0
  \end{array}\right) - \left(\begin{array}{c}
    \beta^{(1)}\\
    0
  \end{array}\right)\right\|_2^2\\
  &=& \left\|\left(\begin{array}{c}
    \zeta^{(1)}\\
    0
  \end{array}\right)\right\|_2^2\\
  &\leq& 2\xi^2\cdot p_0 \cdot \log p\cdot\sigma^2
  \end{eqnarray}
  by $\|\zeta\|_{\infty}\leq \xi\sqrt{2\log p}\sigma$.
\end{proof}
\section{Computation and Implementation}
LAGS is a convex program, thus global minimum is assured. There are a lot of algorithms available to solve (\ref{lags}), such as the subgradient method, interior-point methods. However, like Dantzig Selector \cite{Candes2007}, LAGS can also be reformulated as a linear program.
\par Denote $|(X^T(y - X\beta))_i| = u_i,\ i = 1, ..., p$; $|\beta_i| = v_i, \ i = 1, ..., p$. Then solving LAGS is equivalent to solve the following linear program with inequality constraints
\begin{equation}
  \min_{u, v, \beta} \sum_{i = 1}^pu_i + \lambda\sum_{i = 1}^p w_i v_i
\end{equation}
subject to
\begin{eqnarray}
  &-u \leq X^T(y - X\beta)\leq u\\
  &-v \leq \beta\leq v
\end{eqnarray}
This linear program has $3p$ unknowns and $4p$ constraints. When $p$ is relatively small, e.g., less than 100, solving linear program is more efficient; when $p$ gets large, solving the convex program directly is recommended.
\par If the computing environment contains the routine of solving regression under least absolute deviance criterion (LAD). LAGS can also be passed into the routine by treating an augmented samples and responses. Let us define $(y^*, X^*)$, where $(y_i^*, x_i^*) = ((X^Ty)_i, (X^TX)_i)$ for $1\leq i\leq p$; $(y_{p + i}^*, x_{p + i}^*) = (0, \lambda w_i\mathbf{e}_j)$ for $1\leq i \leq p$, $\mathbf{e}_j$ is the $j$th row of identity matrix. Then it can be verified that
\[\hat{\beta} = \textrm{arg min}\|y^* - X^*\beta\|_1\]
So LAGS can be solved without much programming effort.
\par In most applications, we need to run a sequence of sparsity parameter $\lambda$ and choose the optimal one based on some criteria such as cross-validation. An efficient way to accomplish this is to pass the previous solution as the ``warm start'' for the new value of $\lambda$. The justification of this technique is that in simplex algorithm, each iteration tries to find a better candidate solution at the vertices adjacent to the current one, if the difference between $\lambda$s are small, the new minimum should be close to the previous one, hence the new solution can be found in a few iterations.
\section{Numerical Simulation and Example}
\subsection{Prostate Cancer Data}
For the sake of illustrating the Pseudo-Hard Thresholding property, we study the simple yet popular example---Prostate Cancer data \cite{Stamey1989,Tibshirani1996}. The response---logarithm of prostate-specific antigen (lpsa) is regressed on  log(cancer volume) (lcavol), log(prostate weight) (lweight), age, the logarithm of the amount of benign prostatic hyperplasia (lbph), seminal vesicle invasion(svi), log(capsular penetration) (lcp), Gleason score (gleason) and percentage Gleason score 4 or 5(pgg45).
\par The data set is divided into two parts: a training set of 67 observations and a test set of 37 observations. It clearly shows in Figure \ref{prostate_profile} that the coefficients profiles of LAGS and $l_0$ penalty are quite similar. As $\lambda$ increases, the predictors are excluded from the model with the same order at almost the same $\lambda$s. The discrete selection processes of LAGS and $l_0$ penalty are indicated by the jumps and constant segments in the profiles: the jumps means the predictors are either included in the model or not; while the constant segments means the coefficients of the included predictors are unchanged even though $\lambda$ increases. Another interesting observation of the profiles is that the roles of some predictors are downplayed when there are many predictors included, which are mainly caused by the high correlation among them; however, as the included predictors are fewer, their roles are shown up and even increase---refer to the brown line in Figure \ref{prostate_profile}, coefficient of lcavol. The right panel is the prediction errors of LAGS and $l_0$ penalty on the test set respectively, they are also piecewise constant functions of $\lambda$.
\begin{figure}[!bhp]
  \center
  \includegraphics[width = 5in, height = 3in]{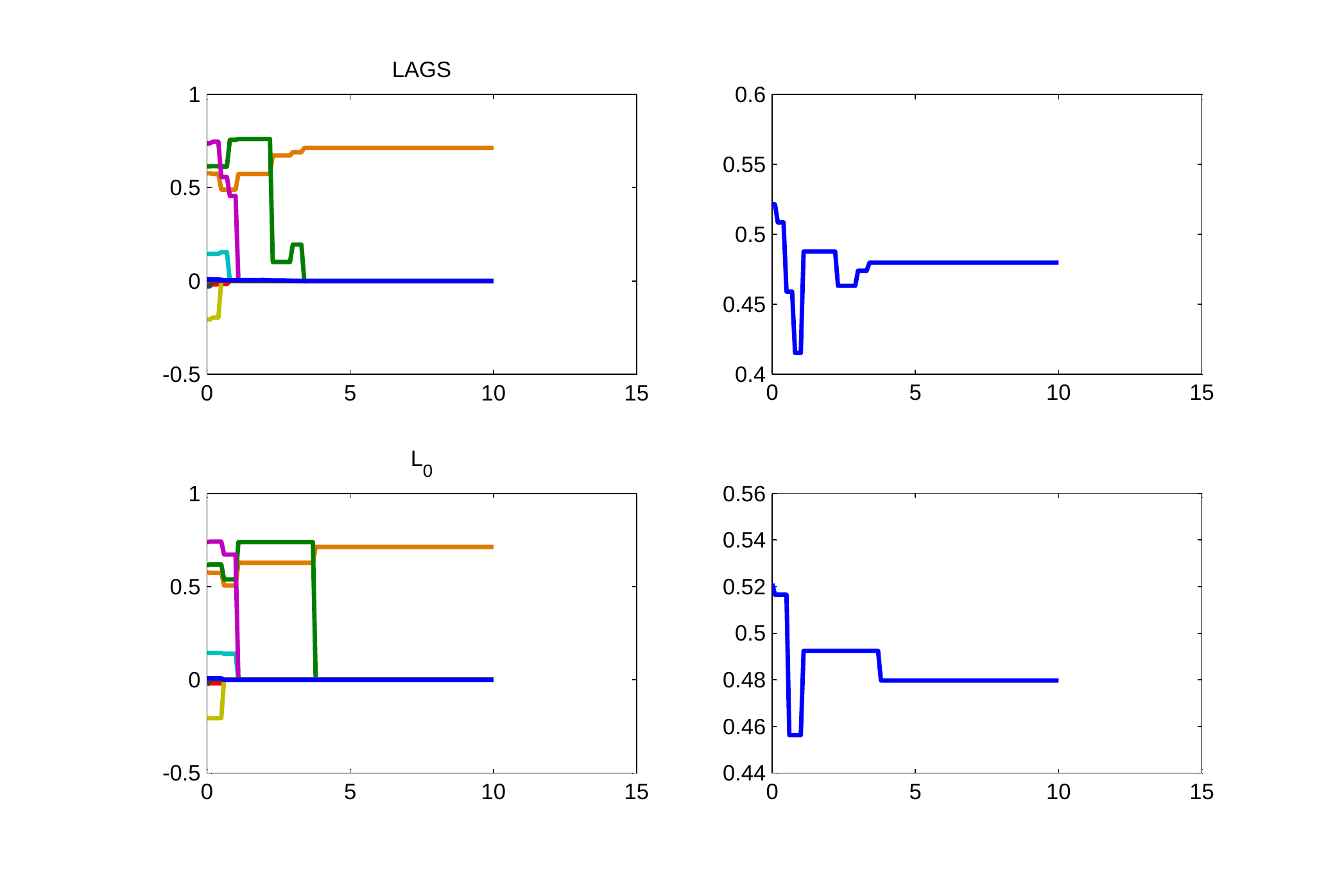}
  \caption{Coefficients profiles as a function of sparsity parameter $\lambda$. There are eight predictors, all the predictors are first centered and standardized before passing to the solvers, then the outputs of solvers are transformed back to the original scale. The right panel is the prediction error for the test data.}
  \label{prostate_profile}
\end{figure}
\par In contrast, the continuous shrinkage property of Lasso are demonstrated in Figure \ref{prostate_lasso}. The \emph{general} trend of the coefficients is decreasing as $\lambda$ increases (this is not true in general, there exist examples that some coefficients increase even though $\lambda$ increases). The prediction error is also larger than that of LAGS and $l_0$ penalty; but care should be taken that this argument can not be generalized; on the contrary, the discrete selection process usually exhibits higher variability hence higher prediction error.
\begin{figure}[!bhp]
  \center
  \includegraphics[width = 5in, height = 1.5in]{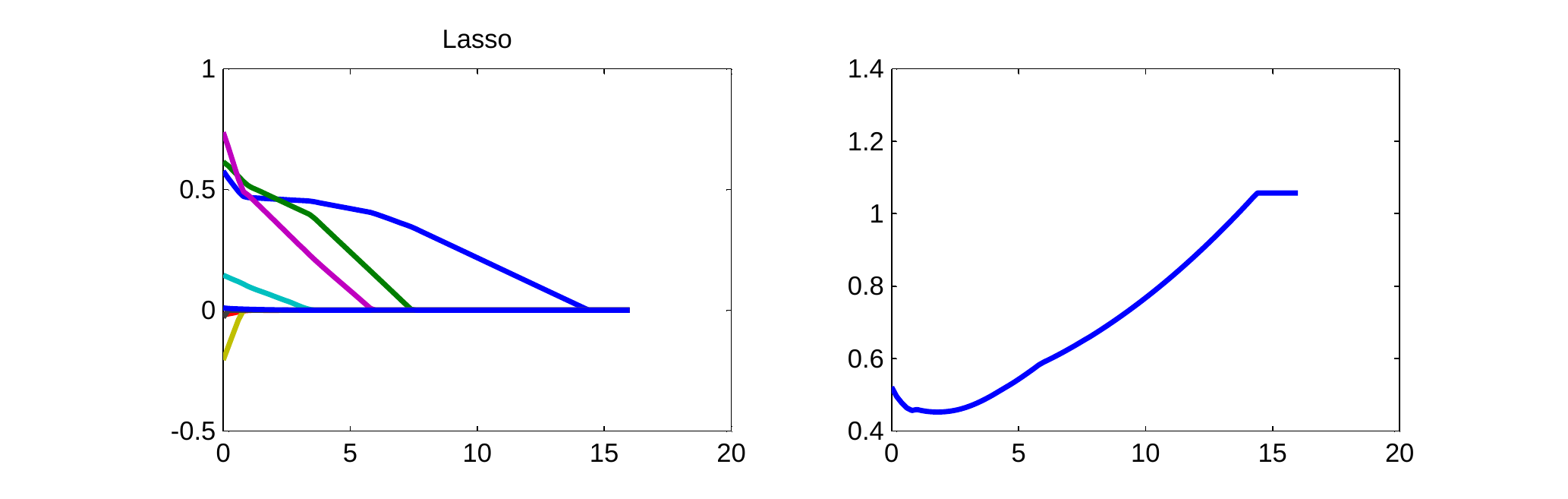}
  \caption{Coefficients profiles as a function of sparsity parameter $\lambda$ by the Lasso.}
  \label{prostate_lasso}
\end{figure}
\subsection{Diabetes Data}
The diabetes data are studied in \cite{Efron,Efron2004}. There are total 442 patients(samples) and 10 predictors in the model. We fit a linear model on this data set. Since LAGS tries to minimize the gradient directly by the $l_1$, it is insightful to compare the gradients between LAGS and Lasso\footnote{Lasso solution is computed by \texttt{R} package \texttt{glmnet} \cite{Friedman2007}}. We have both computed the LAGS and Lasso solution with 5-fold cross validation as in Table \ref{diabetes10}; since LAGS sometimes demonstrates a little higher variability, we pick the most parsimonious model within 45\%-50\% standard error of the minimum instead of the ``one-standard error'' rule. LAGS has 4 nonzero coefficients while Lasso has 5. Moreover, we see in the table that the absolute inner product of predictors with the residue of LAGS is very sparse by its effort to minimizing the gradient directly, whereas that of Lasso is much denser and satisfies
\[X_j^T(y - X\beta^{lasso}) = \lambda \cdot \textrm{sign}(\beta_j^{lasso}),\quad \forall\ \beta_j^{lasso} \neq 0\]
and \[|X_j^T(y - X\beta^{lasso})|\leq \lambda, \quad\forall\ \beta_j^{lasso} = 0\]
It is also notable that the magnitudes of the nonzero coefficients of LAGS are larger than that of Lasso. The reason is that Lasso both shrinks and selects, it tries to relax the penalty on relevant coefficients, as a consequence, some important predictors are downplayed by sharing their weights to others and selected model is relatively dense. This argument can be further verified by comparing the $\|\beta\|_1$: $\|\beta^{lasso}\|_1 \approx 1335$ and $\|\beta^{lags}\|_1 \approx 1617$. Table \ref{corre} summarizes the correlations between the predictors and residue, which shows that LAGS enables some predictors to be exactly orthogonal to the residue. This simple data set shows the superiority of LAGS over Lasso.
\begin{table}[!hbp]
\center
\begin{tabular}{crrrrrr}
  \hline
  \hline
  & \multicolumn{2}{c}{LAGS} & & \multicolumn{2}{c}{Lasso}\\
  \cline{2-3}   \cline{5-6}
  Variable $j$ & $X_j^T(y - X\beta)$ & $\hat{\beta_j}$ && $X_j^T(y - X\beta)$ & $\hat{\beta_j}$\\
  \hline
    1 &  -27.9927    &     0.0000   &&    14.5431 & 0.0000 \\
    2  &       -134.9231 & 0.0000  &&  -111.7310 & -33.3383 \\
    3 &        0.0000 & 604.7797 &&  111.7310 & 508.1903 \\
    4 &        0.0000 & 268.1098 &&  111.7310 & 210.3536 \\
    5  & -53.2906 & -133.8965 && -55.5267 & 0.0000 \\
    6 &         0.0000 &        0.0000 &&  -54.0219 & 0.0000\\
    7 &         119.4116  &0.0000 &&  111.7310 & 138.8478\\
    8 &   73.3477  &      0.0000   &&   66.2507 &     0.0000\\
    9 &          0.0000 &609.8394 &&  111.7310 & 444.5615\\
   10 &   39.7171   &      0.0000  &&  101.9332 & 0.0000\\
   \hline
   Mean Squared Error &\multicolumn{2}{c}{3021} && \multicolumn{2}{c}{3044}\\
   \hline
\end{tabular}
\caption{The gradient $X_j^T(y - X\beta)$ and coefficient $\beta_j$ in each coordinate on the diabetes data with 10 predictors by LAGS and Lasso respectively.}
\label{diabetes10}
\end{table}
\begin{table}
  \center
  \begin{tabular}{|c|rrrrrrrrrr|}
  \hline
  & \multicolumn{9}{c}{correlations between the predictors and residue} &\\
  \hline
  LAGS & -0.02  & -0.12  &  0.00  &  0.00  & -0.05  &  0.00  &  0.10  &  0.06  &  0.00  &  0.03\\
  Lasso & 0.01 & -0.10 &  0.10 &  0.10 & -0.05 & -0.05 &  0.10 & 0.06 &  0.10 &  0.09\\
  \hline
  \end{tabular}
  \caption{The correlations with the residue: $X_j \cdot \textrm{res} / \|X_j\|_2\|\textrm{res}\|_2$}
  \label{corre}
\end{table}
\subsection{Simulated Data}
We compare the simulation performance of LAGS and Sparsenet\footnote{The data sets and Sparsenet solution are computed by \texttt{R} package \texttt{sparsenet} \cite{Mazumder2011}} \cite{Mazumder2011} and Lasso \cite{Tibshirani1996} with regard to training error, prediction error and number of non-zero coefficients in the model. We assume that the predictors and errors are Gaussian distributed. If $X\sim \mathcal{N}(0, \Sigma)$ and $\epsilon\sim \mathcal{N}(0, \sigma^2)$, then the Signal-to-Noise Ratio (SNR) is defined as
\[\textrm{SNR} = \frac{\sqrt{\beta^T\Sigma\beta}}{\sigma}\]
We take $\Sigma = \Sigma(\rho)\in \mathbf{R}^{p\times p}$ with 1's on the diagonal and $\rho$ on the off-diagonal. We generate two data sets with SNR = 2, $\rho = 0.2$ and SNR = 3, $\rho = 0.4$ respectively, the sample sizes $n$ are both 2000 and $\beta = (30, 29, 28, ..., 1, 0_{970})$, $p = 1000$. In order to evaluate the performances of these three algorithms when $p\gg n$, we split the two data sets into training set with 500 samples and testing set with 1500 samples. Before passing the training set into the three algorithms, we first standardize the predictors, then choose the sparsity parameter $\lambda$ by 10-fold cross-validation; since $p \gg n$, the weights for LAGS are set to be the inverse of ridge estimate with $\phi = 0.2$ (since each predictor is standardized, the diagonals of $X^TX$ are 1s, $\phi$ is chosen quite arbitrarily from $(0, 1)$). The results are summarized in Table \ref{n500}, we can observe that Lasso solution is overly dense, whereas Sparsenet solution is overly sparse; LAGS stays between the two and is closer to the true model. In Table \ref{n1500}, the data sets are split into training set with 1500 samples and testing set with 500 samples, the weights are inverse of OLS estimates, $\lambda$ is again chosen by 10-fold cross-validation. In both simulations, the prediction errors of LAGS are slightly larger than that of Sparsent because of the variability caused by discreteness; however, LAGS tends to discover the true models while Sparsenet tends to discover the sparser ones.
\begin{table}
\center
  \begin{tabular}{|c|l|l|l|}
    \hline
    & LAGS & Sparsenet & Lasso\\
    \hline
    \hline
    \multirow{3}{*}{\parbox{1.5cm}{$\rho = 0.2$\\ SNR = 2}} &  \# nonzeros: 33 &  \# nonzeros: 20 & \# nonzeros: 92\\
    & Training Error:  1915.4 & Training Error: 1874.3 & Training Error: 1624.3\\
    & Testing Error: 2258.3 & Testing Error: 2228.3 & Testing Error: 2505.5\\
    \hline
    \multirow{3}{*}{\parbox{1.5cm}{$\rho = 0.4$\\ SNR = 3}} & \# nonzeros: 30 &  \# nonzeros: 22 & \# nonzeros: 112\\
    & Training Error:  710.3 & Training Error: 714.2 & Training Error: 584.1\\
    & Testing Error: 785.5 & Testing Error: 772.0 & Testing Error: 938.8\\
    \hline
  \end{tabular}
  \caption{$n$ = 500, $p$ = 1000, \# test set = 1500. The number of true predictors is 30.}
  \label{n500}
\end{table}
\begin{table}
\center
  \begin{tabular}{|c|l|l|l|}
    \hline
    & LAGS & Sparsenet & Lasso\\
    \hline
    \hline
    \multirow{3}{*}{\parbox{1.5cm}{$\rho = 0.2$\\ SNR = 2}} &  \# nonzeros: 29 &  \# nonzeros: 26 & \# nonzeros: 97\\
    & Training Error:  1900.7 & Training Error: 1854.1 & Training Error: 1811.9\\
    & Testing Error: 2164.9 & Testing Error: 2064.4 & Testing Error: 2274.6\\
    \hline
    \multirow{3}{*}{\parbox{1.5cm}{$\rho = 0.4$\\ SNR = 3}} & \# nonzeros: 29 &  \# nonzeros: 26 & \# nonzeros: 122\\
    & Training Error:  654.3 & Training Error: 594.1 & Training Error: 607.5\\
    & Testing Error: 683.2 & Testing Error: 682.4 & Testing Error: 760.7\\
    \hline
  \end{tabular}
  \caption{$n$ = 1500, $p$ = 1000, \# test set = 500. The number of true predictors is 30.}
  \label{n1500}
\end{table}
\section{Discussion}
\subsection{Adapted version of Dantzig Selector}
In section 4, we argue that one possible explanation of DS in \cite{Efron} is the uniformity of the weight put on each estimator. We adopt the choice of weights as with LAGS. Thus, we need to solve the weighted version of DS
\[\hat{\beta} = \textrm{arg }\min\|X^T(y - X\beta)\|_{\infty} \quad\quad\textrm{subject to}\quad \sum_{i = 1}^p\frac{|\beta_i|}{|\beta_{OLS}|_i} \leq t\]
\begin{figure}
\center
  \includegraphics[width = 5in, height = 2in]{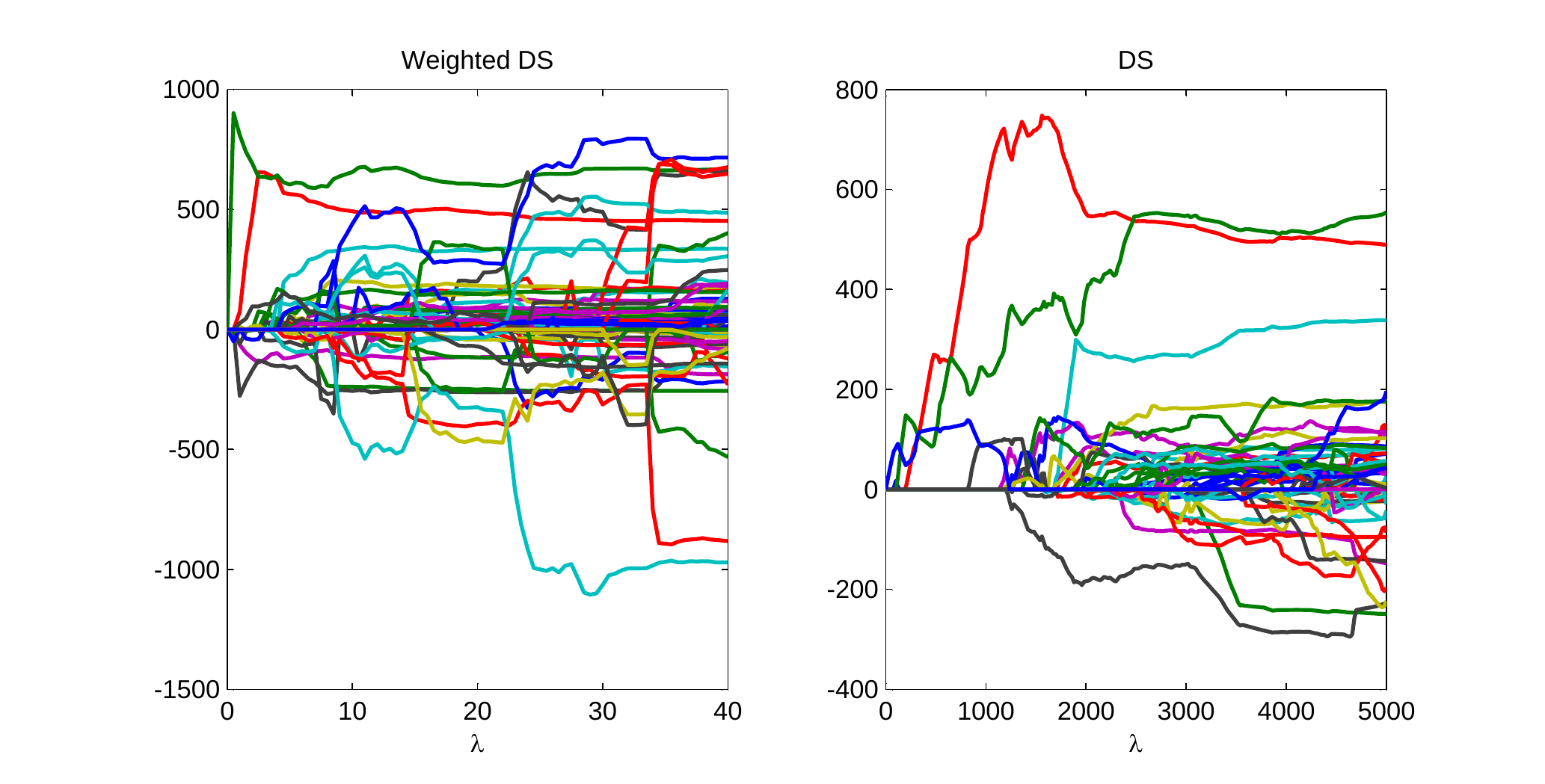}
  \caption{The coefficient profiles of the weighted-DS and DS on diabetes data with interaction terms.}
  \label{weightedDS}
\end{figure} \\
We test our idea on the extended diabetes data where the interactions are included. The left panel of Figure \ref{weightedDS} is the weighted version, while the right panel is the original version which is also appeared in \cite{Efron}. The erratic behavior of DS is relatively mitigated with the weighted DS.
\subsection{Contributions in this paper}
\par In this paper, we propose a new algorithm for linear regression---LAGS and introduce the ``pseudo-hard thresholding'' properties which mimics the $l_0$ regularization. Under mild conditions, we have proved that asymptotically, LAGS is consistent for model selection and parameter estimation. Since the strength of the variable selection algorithms lies in its finite sample performance, we have also established the nonasymptotic theorem which shows that with large probability, LAGS can discover the true model and the error of the estimated parameters is controlled under the noise level. In the proofs of these theorems, we emphasize that the weights on the parameters play a critical role for the effectiveness of LAGS.
\par LAGS is a re-weighted regularization method which is first discussed in adaptive Lasso \cite{Zou2006}, it can be also interpreted as a multistage procedure: first, we provide a very coarse estimate of the parameters of interest; second, based on this estimate, we are able to seek much better ones. Letting the regularization part depend on the data set makes the theories much more difficult; however, it usually results in better performance.
\par The subject of variable selection in linear models has large bodies of literature. The efforts are mainly divided into two streams: on the one hand, the discrete selection procedures of $l_0$ penalty methods such as AIC, BIC are shown to enjoy many nice properties, but they are highly impractical; on the other hand, the continuous shrinkage algorithms such as Lasso and LARS are computationally favorable but they do not have hard-thresholding property and the bias introduced by them is significant sometimes. Our work bridges the gap by pioneering the discrete selection process in the convex world. The attractive properties of LAGS indicate that in some applications, LAGS can be served as a surrogate for $l_0$ penalty and an improved version of the continuous shrinkage methods.
\par There are a group of algorithms on the shelf to solve LAGS. However, since LAGS needs to compute a sequence of solutions like Lasso and Sparsenet, one possible future work is to develop efficient path algorithms such as glmnet and sparsenet.
\bibliographystyle{plain}
  \bibliography{PHT}
\end{document}